\def\eqref#1{equation~\ref{#1}}
\def\1{\bm{1}}
\DeclareMathAlphabet{\mathsfit}{\encodingdefault}{\sfdefault}{m}{sl}
\SetMathAlphabet{\mathsfit}{bold}{\encodingdefault}{\sfdefault}{bx}{n}
\newcommand{\R}{\mathbb{R}}
\newcommand{\xmark}{\ding{55}}
\renewcommand{\eqref}[1]{(\ref{#1})}
\theoremstyle{plain}
\newtheorem{theorem}{Theorem}[section]
\newtheorem{lemma}[theorem]{Lemma}
\theoremstyle{definition}
\theoremstyle{remark}
\icmltitlerunning{LOTFormer: Doubly-Stochastic Linear Attention via Low-Rank Optimal Transport}
\begin{document}

\twocolumn[
  \icmltitle{LOTFormer: Doubly Stochastic Linear Attention via \\ Low Rank Optimal Transport}

  % It is OKAY to include author information, even for blind submissions: the
  % style file will automatically remove it for you unless you've provided
  % the [accepted] option to the icml2026 package.

  % List of affiliations: The first argument should be a (short) identifier you
  % will use later to specify author affiliations Academic affiliations
  % should list Department, University, City, Region, Country Industry
  % affiliations should list Company, City, Region, Country

  % You can specify symbols, otherwise they are numbered in order. Ideally, you
  % should not use this facility. Affiliations will be numbered in order of
  % appearance and this is the preferred way.
  \icmlsetsymbol{equal}{*}

\begin{icmlauthorlist}
  \icmlauthor{Ashkan Shahbazi}{vandy}
  \icmlauthor{Chayne Thrash}{vandy}
  \icmlauthor{Yikun Bai}{vandy}
  \icmlauthor{Keaton Hamm}{uta}
  \icmlauthor{Navid NaderiAlizadeh}{duke}
  \icmlauthor{Soheil Kolouri}{vandy}
\end{icmlauthorlist}

\icmlaffiliation{vandy}{Department of Computer Science, Vanderbilt University, Nashville, Tennessee, USA}
\icmlaffiliation{uta}{Department of Mathematics, The University of Texas at Arlington, Arlington, Texas, USA}
\icmlaffiliation{duke}{Department of Biostatistics and Bioinformatics, Duke University, Durham, North Carolina, USA}

% Set these to whoever you want as corresponding author(s) and the correct email(s)
\icmlcorrespondingauthor{Ashkan Shahbazi}{your.email@vanderbilt.edu}
\icmlcorrespondingauthor{Soheil Kolouri}{soheil.kolouri@vanderbilt.edu}

\icmlkeywords{Machine Learning, ICML}

\vskip 0.3in
]
% this must go after the closing bracket ] following \twocolumn[ ...

% This command actually creates the footnote in the first column listing the
% affiliations and the copyright notice. The command takes one argument, which
% is text to display at the start of the footnote. The \icmlEqualContribution
% command is standard text for equal contribution. Remove it (just {}) if you
% do not need this facility.

% Use ONE of the following lines. DO NOT remove the command.
% If you have no special notice, KEEP empty braces:
\printAffiliationsAndNotice{}  % no special notice (required even if empty)
% Or, if applicable, use the standard equal contribution text:
% \printAffiliationsAndNotice{\icmlEqualContribution}

\begin{abstract}

Transformers have proven highly effective across modalities, but standard softmax attention scales quadratically with sequence length, limiting long context modeling. Linear attention mitigates this by approximating attention with kernel feature maps, yet most attention mechanisms remain row normalized and can over concentrate mass on a few tokens, harming robustness and information flow. Doubly stochastic attention counteracts this by balancing token participation across both rows and columns, but existing approaches often add significant overhead. We propose LOTFormer, a linear time doubly stochastic attention mechanism derived from an optimal transport view of attention as a coupling between query and key measures. LOTFormer enforces a low rank transport plan by conditioning on a learnable pivot measure with small support. We solve two entropic transport problems, queries to pivot and pivot to keys, and compose them into a conditional coupling that is provably doubly stochastic, has rank at most $r \ll n$, and applies to values in $O(nr)$ time without forming the full $n \times n$ matrix. The pivot locations and masses are learned end-to-end. Across vision and text benchmarks, LOTFormer delivers strong accuracy efficiency tradeoffs when plugged into standard backbones including Swin, DeiT, and BERT.
\end{abstract}

\vspace{-0.1in}
\section{Introduction}

Transformers \citep{vaswani2017attention} have emerged as one of the most influential neural network architectures, achieving state‑of‑the‑art performance across a wide range of domains \citep{lin2022survey}, from natural language processing \citep{grattafiori2024llama} to computer vision \citep{khan2022transformers}, audio and speech processing \citep{gulati2020conformer}, multi‑modality \citep{liu2023visual}, protein folding \citep{jumper2021highly}, and bioinformatics \citep{dalla2025nucleotide}. At the core of Transformer architectures lies the \emph{attention} mechanism, which effectively models complex dependencies among tokens in a sequence. As the demand for models capable of reasoning over long contexts continues to grow, extending Transformers to larger context windows has become increasingly important. However, this goal is hindered by the quadratic computational complexity of the attention mechanism. This challenge has spurred a growing body of research aimed at reducing the computational burden of attention, with a prominent line of work focusing on \emph{linear attention} methods \citep{katharopoulos2020transformers,wang2020linformer, choromanski2020rethinking, shen2021efficientattention, chen2021skyformer, xiong2021nystromformer, meng2025polaformer}. Similarly, our primary goal is to design attention mechanisms with linear complexity in sequence length.

On the other hand, prior work has shown that the row-stochastic nature of attention matrices often leads to attention concentrating disproportionately on a few tokens, which can hinder effective information flow \citep{sander2022sinkformers,shahbazi2025espformer}. In the case of vision Transformers, this phenomenon---referred to as \emph{token overfocusing}---has been observed to degrade performance, and promoting broader token participation has been found to improve both robustness and accuracy \citep{guo2023robustifying}. Moreover, recent work demonstrates that mitigating overfocusing yields smoother, more interpretable attention maps and stronger performance on dense prediction and object discovery tasks \citep{darcet2024vision}.
One approach to mitigate overfocusing is to transform the row-stochastic attention matrix into a doubly-stochastic one, for example, using the Sinkhorn algorithm \citep{sinkhorn1964relationship}. While effective in enhancing robustness, this approach incurs even greater computational cost than standard quadratic attention, since Sinkhorn iterations are applied on top of the full quadratic attention map. More recently, \citet{shahbazi2025espformer} exploited the connection between transport plans \citep{peyre2019computational} and doubly-stochastic attention matrices and, building on advances in efficient transport plan computation \citep{liu2025estp}, introduced \textsc{ESPFormer}---a more computationally efficient formulation of doubly-stochastic attention. Nevertheless, \textsc{ESPFormer} remains quadratic during training due to its reliance on soft-sorting and exhibits super-linear complexity at inference.

In this work, we take a step further by investigating, for the first time, the feasibility of computing doubly-stochastic attention in \emph{linear} time. We propose a novel attention mechanism that, similar to \textsc{ESPFormer}, formulates attention as a transportation plan between the empirical measures defined by queries and keys. Our key innovation is the introduction of a learnable \emph{pivot measure} with a small support size $r \ll n$, which serves as an intermediate representation. Instead of directly computing the $n \times n$ transport plan, we factorize it by first solving for the optimal transport between queries and the pivot, and then between the pivot and keys. This construction, which we denote as LOTFormer, yields a low-rank decomposition of the attention matrix that simultaneously preserves its doubly-stochastic structure and reduces the computational complexity from quadratic to linear in the sequence length. Figure~\ref{fig:main} illustrates our framework for sample queries and keys under varying pivot sizes, i.e., different values of $r$.

% \textbf{Our specific contributions} are summarized below:
% \vspace{-.1in}
% \begin{enumerate}
%     \item We introduce LOTFormer, a mathematically rigorous mechanism for computing doubly-stochastic attention with linear complexity in sequence length.
%     \item We demonstrate that LOTFormer achieves state-of-the-art performance on the Long Range Arena (LRA) benchmark.
%     \item We show that LOTFormer achieves state-of-the-art performance on ImageNet-1K compared to other linear attention methods. 
% \end{enumerate}

\textbf{Our specific contributions} are summarized below:
\vspace{-.1in}
\begin{enumerate}[itemsep=0pt]%make item separation 0
    \item We propose LOTFormer, a linear time attention operator that realizes a doubly stochastic attention matrix through a low rank optimal transport factorization via a learnable pivot measure, enabling DS normalization without forming an \(n\times n\) attention map.
    \item We introduce a controlled plug and play evaluation protocol for bidirectional encoder self attention, swapping only the attention modules in trained checkpoints while keeping all other parameters fixed.
    \item We demonstrate consistent ImageNet 1K gains on DeiT Tiny and Small, PVT Tiny, and Swin Tiny, matching or exceeding strong linear attention and doubly stochastic baselines under matched training and evaluation settings.
\end{enumerate}

\begin{figure*}[t!]
    \centering
    \includegraphics[width=\linewidth]{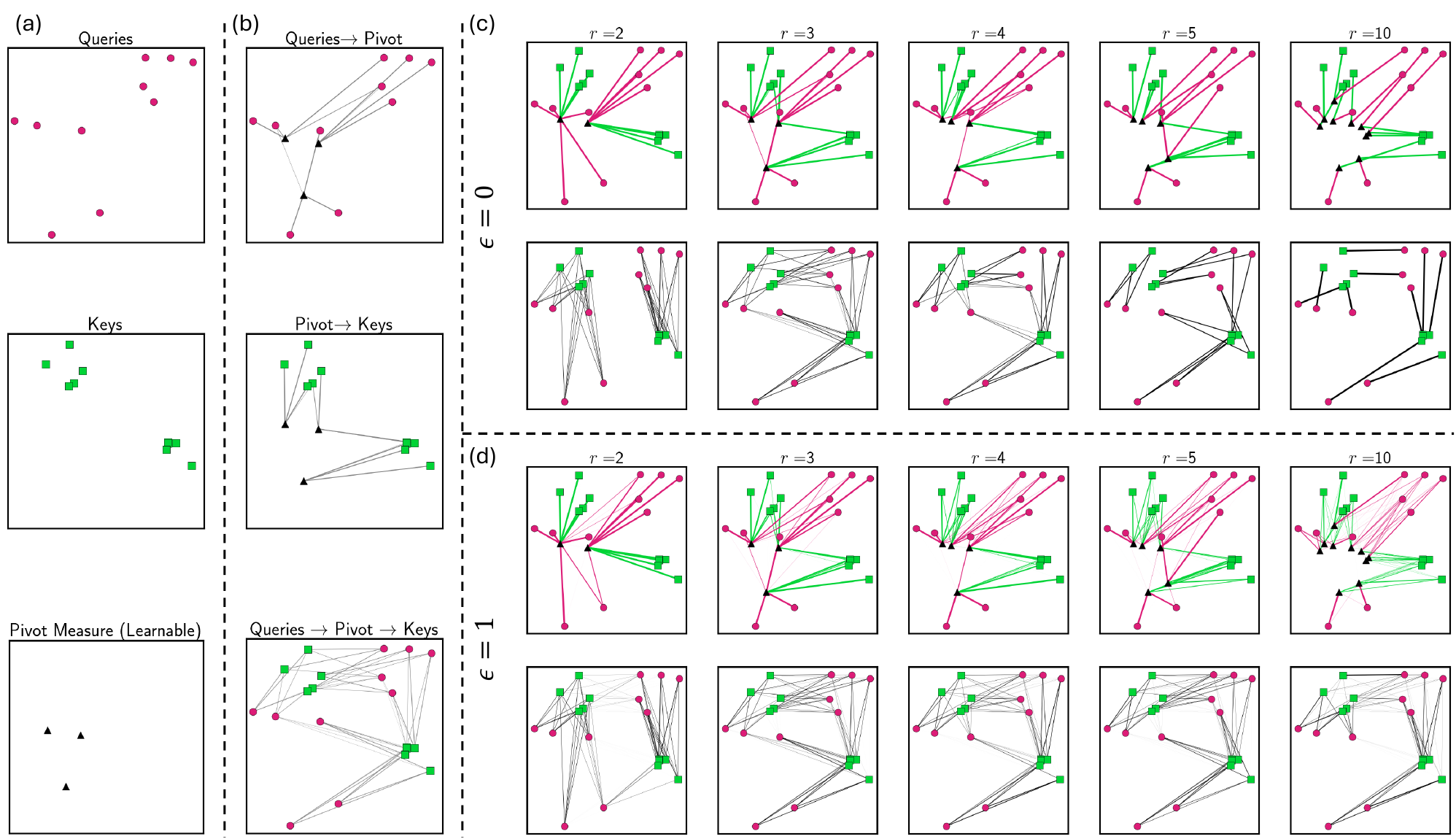}
    \caption{
Illustration of LOTFormer. 
\textbf{(a)} Queries (red circles), keys (green squares), and the learnable pivot measure (black triangles). 
\textbf{(b)} Factorization of the transport plan: instead of solving directly for a full $n\times n$ coupling between queries and keys, LOTFormer first computes transport from queries$\to$pivot and pivot$\to$keys, then composes them into a glued coupling. 
\textbf{(c--d)} Effective query–key couplings induced by the pivot measure for different pivot sizes ($r=2,3,4,5,10$). 
Top row of each block shows the mediated connections via pivots, and bottom row shows the resulting query–key couplings. 
\textbf{(c)} Without entropic regularization ($\varepsilon=0$), couplings are sharp and sparse. 
\textbf{(d)} With entropic regularization ($\varepsilon=1$), couplings become smoother and more diffuse.}
\vspace{-.2in}
    \label{fig:main}
\end{figure*}

\vspace{-.2in}
\section{Related Work}
% \vspace{-.1in}

\noindent\textbf{Linear Attention Mechanisms.} 
The original Transformer relies on quadratic time softmax attention, which becomes prohibitive as sequence length grows \citep{vaswani2017attention}. To address this, a broad literature has developed \emph{linear attention} methods that approximate or restructure attention to reduce computational and memory costs. Early approaches include \emph{Local Attention} \citep{local_attn} and \emph{Longformer} \citep{beltagy2020longformerlongdocumenttransformer}, which restrict interactions to local windows. More structured sparsity is explored by \emph{Reformer} \citep{Kitaev2020Reformer} and \emph{BigBird} \citep{NEURIPS2020_c8512d14}, which use locality sensitive hashing and block sparse patterns, respectively. A second family replaces the softmax kernel with low rank or kernelized approximations. \emph{Linear Transformer} \citep{katharopoulos2020transformers} and \emph{Performer} \citep{choromanski2020rethinking} use kernel feature maps to rewrite attention so that computation scales linearly with sequence length. \emph{Linformer} \citep{wang2020linformer} introduces low rank projections of key and value representations, while \emph{Nyströmformer} \citep{xiong2021nystromformer} applies Nyström approximations to kernel matrices. Related directions include \emph{Kernelized Attention} \citep{NEURIPS2021_c0f168ce}, \emph{Cosformer} \citep{zhen2022cosformer}, and \emph{Skyformer} \citep{chen2021skyformer}, which improve feature maps, projections, or bases. \emph{Synthesizer} \cite{tay2021synthesizer} departs from explicit query key interactions by learning synthetic attention weights. \emph{Informer} \citep{haoyietal-informer-2021} and \emph{Hedgehog} \citep{zhang2024the} target time series and structured settings, combining efficient approximations with task specific inductive biases. Collectively, these works replace dense quadratic interactions with kernel, low rank, or sparse mechanisms that aim to preserve expressivity while improving scalability.

\noindent\textbf{Doubly Stochastic Attention.}
Beyond efficiency, another line of work focuses on the \emph{stochastic structure} of attention matrices. Softmax enforces row stochasticity but does not normalize columns, which can yield imbalanced attention where a small set of tokens receives disproportionate mass. \emph{SinkFormer} \citep{sinkformer} enforces \emph{doubly stochastic} constraints via iterative Sinkhorn normalization, producing balanced transport like couplings between queries and keys. \emph{ESPFormer} \citep{shahbazi2025espformer} leverages expected sliced transport plans with annealed temperature schedules, aiming to improve both stability and performance by bridging soft and hard assignments.

More recently, \emph{Quantum DS Attention} \citep{born2025quantumdoublystochastictransformers} extends doubly stochastic constructions using quantum inspired parameterizations, exploiting structure from quantum stochastic matrices and unitary constraints while maintaining tractability. We view this as concurrent work that appeared during our study, and therefore we do not include a direct comparison against it in our experiments. Overall, these methods suggest that doubly stochasticity is more than a heuristic regularizer: it reframes attention as a transport problem with stronger normalization that can improve stability and interpretability of the learned couplings.

\noindent\textbf{Position of LOTFormer.}
\emph{LOTFormer} sits at the intersection of efficient and structured attention. It adopts an optimal transport view of attention, but applies it in linear time by forming a low rank coupling through a learnable pivot measure. This construction yields a doubly stochastic attention operator with explicit structure, while avoiding iterative normalization over full $n\times n$ maps. Moreover, the pivot factorization makes the transport structure transparent and interpretable, since interactions are mediated through a small set of representative support points. As a result, LOTFormer can be seen as a principled bridge between doubly stochastic normalizations and practical linear attentions.

\vspace{-.1in}
\section{Method}

We follow the notation in Appendix \ref{sec:notations} for optimal transport and doubly stochastic attention.
\begin{figure*}[t]
  \centering
  % trim = {left bottom right top}
  \includegraphics[width=0.9\textwidth]{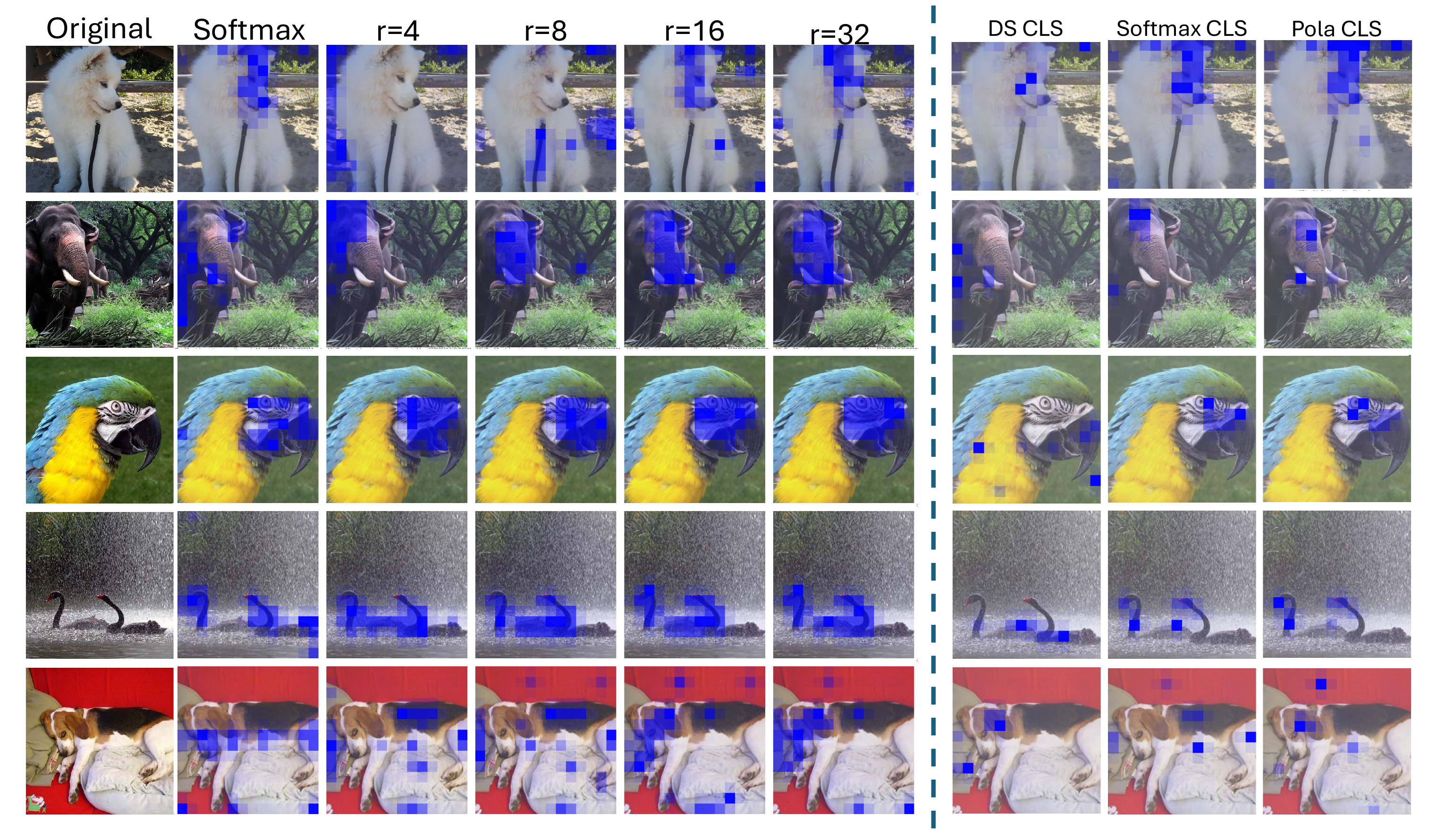}
  \vspace{-.1in}
  \caption{Patch-level visualizations of \texttt{[CLS]} attention. 
\textbf{(Left)} Comparison of standard \textsc{Softmax} attention with LOTFormer at different pivot sizes $r\!\in\!\{4,8,16,32\}$, showing how larger $r$ produces sharper, more object-centric maps. 
\textbf{(Right)} Effect of different \texttt{[CLS]} treatments (all without DWC): enforcing DS on \texttt{[CLS]} (\emph{full DS}) degrades global aggregation, whereas decoupling it via \texttt{[CLS]}–softmax restores broad coverage, and adding polarization (+Pola) further sharpens selectivity. 
The leftmost column preserves the original image for context, while the other columns use a neutral gray background to standardize contrast.}
  \label{fig:imagenet-r}
  \vspace{-.2in}
\end{figure*}

% \paragraph{Entropic Optimal Transport.}
% Optimal transport (OT) studies couplings between two probability distributions that match prescribed marginals while optimizing an alignment objective.
% In the discrete setting, given $p\in\Delta^{n-1}$, $q\in\Delta^{m-1}$, and a score matrix $S\in\R^{n\times m}$, entropic OT finds a coupling
% \[
% \Gamma^\star \in \arg\max_{\Gamma\ge 0}\ \langle S,\Gamma\rangle+\varepsilon H(\Gamma)
% \quad \text{s.t.}\quad
% \Gamma\mathbf{1}=p,\ \Gamma^\top\mathbf{1}=q,
% \]
% where $H(\Gamma)=-\sum_{i,j}\Gamma_{ij}(\log\Gamma_{ij}-1)$ and $\varepsilon>0$.
% The constraints enforce exact row and column marginals, while the entropy term encourages diffuse couplings and makes the problem strictly convex in the minimization form.
% The solution admits a Sinkhorn scaling $\Gamma^\star=\mathrm{Diag}(u)\,K\,\mathrm{Diag}(v)$ with $K=\exp(S/\varepsilon)$, enabling efficient iterative solvers.
% As $\varepsilon$ decreases, the coupling becomes sharper and approaches an unregularized OT plan.

% \vspace{-0.1in}
% \paragraph{From OT couplings to Attention.}
% In self attention, queries and keys define a similarity matrix $S_{ij}=q_i^\top k_j$.
% Interpreting $S$ as OT scores, the corresponding normalized plan yields an attention map whose mass is controlled by the marginals.
% In the balanced case $p=q=\tfrac{1}{n}\mathbf{1}$, the rescaled plan $A:=n\,\Gamma^\star$ is doubly stochastic, preventing attention from collapsing onto a small subset of tokens.

\noindent\textbf{Entropic Optimal Transport.}
Optimal transport (OT) constructs a nonnegative coupling matrix that links two discrete probability vectors while matching their prescribed marginals. Concretely, given $p\in\Delta^{n-1}$, $q\in\Delta^{m-1}$, and a similarity matrix $C\in\mathbb{R}^{n\times m}$, OT searches over the transportation polytope
\[
U(p,q)\;=\;\{\Gamma\in\mathbb{R}^{n\times m}_{+}:\ \Gamma\mathbf{1}=p,\ \Gamma^\top\mathbf{1}=q\},
\]
and selects a coupling that favors large similarity. Entropic regularization augments this selection with the entropy
\[
H(\Gamma)\;=\;-\sum_{i,j}\Gamma_{ij}\big(\log\Gamma_{ij}-1\big),
\]
which encourages diffuse couplings and yields a smooth, stable mapping from similarities to couplings. We will use this template as a normalization primitive, and later instantiate it with dot product similarities.

\noindent\textbf{OT as structured normalization.}
From the attention viewpoint, an OT coupling turns raw pairwise similarities into weights whose total mass allocation across rows and columns is controlled by the chosen marginals. This provides a principled alternative to purely row normalized attention and motivates constructing attention weights by solving OT problems induced by learned similarities.

\noindent\textbf{Setup.} 
Let $\{x_i\}_{i=1}^n \subset \mathbb{R}^{d_{\mathrm{in}}}$ be the set of $n$ input tokens to the attention block and
{\setlength{\abovedisplayskip}{2pt}\setlength{\belowdisplayskip}{2pt}
\[
\begin{aligned}
q_i = W_Q x_i \in \mathbb{R}^{d_k},
k_i = W_K x_i \in \mathbb{R}^{d_k},
v_i = W_V x_i \in \mathbb{R}^{d_v}.
\end{aligned}
\]
}

for $i\in [1,n]$, denote the queries, keys, and values accordingly, with learned projections $W_Q, W_K \in \mathbb{R}^{d_k\times d_{\mathrm{in}}}$ and $W_V \in \mathbb{R}^{d_v\times d_{\mathrm{in}}}$.
Let $Q=[q_1,\dots,q_n]^\top\!\in\mathbb{R}^{n\times d_k}$, $K=[k_1,\dots,k_n]^\top\!\in\mathbb{R}^{n\times d_k}$,
and $V=[v_1,\dots,v_n]^\top\!\in\mathbb{R}^{n\times d_v}$.

\noindent\textbf{Empirical measures and pivot.} In line with earlier approaches that establish a connection between attention mechanisms and optimal transport (OT) \citep{sander2022sinkformers,shahbazi2025espformer}, we represent queries and keys as empirical measures, namely,
{\setlength{\abovedisplayskip}{2pt}\setlength{\belowdisplayskip}{2pt}
\[
\begin{aligned}
\mu &= \sum_{i=1}^n \mathrm{p}^1_i\,\delta_{q_i},\qquad
\nu = \sum_{j=1}^n \mathrm{p}^2_j\,\delta_{k_j},
\end{aligned}
\]
}

$\text{typically } \mathrm{p}^1_i=\mathrm{p}^2_j=\tfrac{1}{n}$. We also introduce a \emph{learnable pivot measure} with support on $r\ll n$ points:
{\setlength{\abovedisplayskip}{1pt}\setlength{\belowdisplayskip}{1pt}
\[
\begin{aligned}
\sigma &= \sum_{t=1}^r \mathrm{p}^0_t\,\delta_{z_t}, \qquad
Z=\{z_t\}_{t=1}^r \subset \mathbb{R}^{d_k},\\
&\mathrm{p}^0_t>0,\quad \sum_{t=1}^r \mathrm{p}^0_t=1,\quad r\ll n.
\end{aligned}
\]
}
%\vspace{-0.2in}
\noindent\textbf{Conditional OT $\mu\to \sigma \to \nu$.} An alternative but fundamental perspective on attention is that the entropically regularized transport plan between $\mu$ and $\nu$—that is, their joint measure—can be interpreted as a doubly stochastic attention matrix linking queries and keys. Our approach builds on this view by introducing a conditional transport plan, defined relative to a pivot measure, to construct such a coupling. By design, this formulation yields an attention matrix that is both low-rank and linearly decomposable. 

Here we define the entropic OT problems from $\mu$ to $\sigma$, and from $\sigma$ to $\nu$. Setting similarity to $c(q,k)=q^\top k$ we define
\[
C^{(1)}_{it}=z_t^\top q_i ,\qquad C^{(2)}_{jt}=z_t^\top k_j  
\]
With a shared regularization $\varepsilon>0$, solve the two entropy-regularized transport problems
\begin{align}
\Gamma^{(1)} &\in \arg\max_{\Gamma\in U(\mathrm{p}^0,\mathrm{p}^1)} \;\;\langle C^{(1)}, \Gamma \rangle \;+\; \varepsilon\, H(\Gamma), \label{eq:lot-ot1}\\
\Gamma^{(2)} &\in \arg\max_{\Gamma\in U(\mathrm{p}^0,\mathrm{p}^2)} \;\;\langle  C^{(2)},\Gamma\rangle \;+\; \varepsilon\, H(\Gamma), \label{eq:lot-ot2}
\end{align}
where $U(\alpha,\beta)=\{\Gamma\ge 0:\Gamma\mathbf{1}=\alpha,\;\Gamma^\top\mathbf{1}=\beta\}$ and
$H(\Gamma)=-\sum_{a,b}\Gamma_{ab}(\log \Gamma_{ab}-1)$. 

By classical OT theory, the maximizers of the above problems exist. Note that these transportation plans admit Sinkhorn scaling forms of:
\[
\Gamma^{(1)}=\mathrm{Diag}(u)\,K^{(1)}\,\mathrm{Diag}(v),\;\; K^{(1)}=\exp\!\big(Q Z^\top/\varepsilon\big),
\]
\[
\Gamma^{(2)}=\mathrm{Diag}(\tilde u)\,K^{(2)}\,\mathrm{Diag}(\tilde v),\;\; K^{(2)}=\exp\!\big(K Z^\top/ \varepsilon\big).
\]
As $\varepsilon \to 0$, the entropically regularized transport plans $\Gamma^{(1)}$ and $\Gamma^{(2)}$ converge to their corresponding optimal transport plans. Having obtained $\Gamma^{(1)}$, the plan from $\mu$ to $\sigma$, and $\Gamma^{(2)}$, the plan from $\sigma$ to $\nu$, we now construct the conditional plan, also referred to as the ``glued coupling,'' between $\mu$ and $\nu$.

\noindent\textbf{Glued coupling and doubly-stochastic attention.}
Define the glued (composed) coupling
\begin{equation}
\Gamma \;=\; (\Gamma^{(1)})^\top\, \mathrm{Diag}(\sigma)^{-1}\, \Gamma^{(2)} \;\in\; \mathbb{R}^{n\times n}.
\end{equation}
Then $\Gamma\mathbf{1}=\mu$ and $\Gamma^\top\mathbf{1}=\nu$, i.e., $\Gamma$ is a transportation plan between $\mu$ and $\nu$.
In the common balanced case $\mu=\nu=\tfrac{1}{n}\mathbf{1}$, we set
{\setlength{\abovedisplayskip}{2pt}\setlength{\belowdisplayskip}{2pt}
\[
\begin{aligned}
A \;=\; \Gamma
&:= (\Gamma^{(1)})^\top \,\mathrm{Diag}(\sigma)^{-1}\, \Gamma^{(2)},\\
s.t.\quad A\mathbf{1} &= \mathbf{1}, \qquad \mathbf{1}^\top A = \mathbf{1}^\top .
\end{aligned}
\]
}
so $A$ is \emph{doubly stochastic}. The LOTFormer head output is\\
{\setlength{\abovedisplayskip}{2pt}\setlength{\belowdisplayskip}{2pt}
\begin{equation}
\label{eq:lotattn}
\begin{aligned}
\mathrm{LOTAttn}(Q,K,V)
= A V \\
= (\Gamma^{(1)})^\top\!\Big(\mathrm{Diag}(\sigma)^{-1}\,(\Gamma^{(2)}V)\Big).
\end{aligned}
\end{equation}
}
\noindent\textbf{Rank and complexity.}
Write
\[
A \;=\; (\,(\Gamma^{(1)})^\top\,\mathrm{Diag}(\sigma)^{-1/2})\;(\mathrm{Diag}(\sigma)^{-1/2}\Gamma^{(2)}),
\]
hence $\mathrm{rank}(A)\le r$ (Note, in default setting $r\leq n$, we can show  $\mathrm{rank}(A)=r$). 
We may avoid forming $A$ itself by using its factorized form, by first computing $Y = \Gamma^{(2)}V\in\R^{r\times d_v}$, then $Z' = \mathrm{Diag}(\sigma)^{-1}Y$, and finally $O = \Gamma^{(1)}Z'\in\R^{n\times d_v}$.
% compute $Y=\Gamma^{(2)}V\in\mathbb{R}^{r\times d_v}$,
% then $Z'=\mathrm{Diag}(\sigma)^{-1}Y$, and finally $O=\Gamma^{(1)}Z'\in\mathbb{R}^{n\times d_v}$.
% {\color{red}K. rewrite of the above, take it or leave it as you like:  Noting that $A = \sum_{i=1}^r \sigma_i^{-1}\Gamma^{(1)}_i(\Gamma_i^{(2)})^\top,$ we see that $\mathrm{rank}(A)=r.$ We may avoid forming $A$ itself by using its factorized form, first computing $Y = \Gamma^{(2)}V\in\R^{r\times d_v}$, then $Z' = \mathrm{Diag}(\sigma)^{-1}Y$, and finally $O = \Gamma^{(1)}Z'\in\R^{n\times d_v}$.} 
Each Sinkhorn iteration uses matrix–vector scalings with kernels $K^{(1)}=\exp(Q Z^\top/\varepsilon)$ and
$K^{(2)}=\exp(Z K^\top/\varepsilon)$, costing $O(nr)$ per iteration (plus $O(nd_k r)$ to form $Q Z^\top$ and $Z K^\top$ once per update).
For fixed $r\ll n$ and modest iteration count $T$, the per-head complexity is $O\!\big(nd_k r + T\,nr\big)$, i.e., \emph{linear in $n$}.

\noindent\textbf{Learning.}
The pivot locations $Z=\{z_t\}$, masses $\sigma\in\Delta^{r-1}$, and projections $(W_Q,W_K,W_V)$ are trained end-to-end by
backpropagating through the shared-$\varepsilon$ Sinkhorn problems \eqref{eq:lot-ot1}–\eqref{eq:lot-ot2} and the glued composition. Note that for multi-head attention, each head has its own $(Z,\sigma)$.

For completeness, we provide a short mathematical explanation of why doubly stochastic constraints are incompatible with standard causal masking except for degenerate solutions. See Appendix ~\ref{app:causal_scope}.

\noindent\textbf{Why not the optimal Low-Rank OT?} 
Our construction produces an attention matrix that is, by design, low-rank due to the introduction of the pivot measure. However, this should not be conflated with the \emph{optimal low-rank transport plan} studied in prior work \cite{scetbon2022lowrank,scetbon2021sinkhorn}. In Low-Rank OT, the support of the low-rank factors is optimized directly to minimize the transportation cost, yielding a low-rank approximation that is provably optimal in that restricted class. By contrast, in our setting, the pivot measure is learned end-to-end as part of the attention mechanism, but not optimized explicitly for transport cost minimization. This distinction naturally raises the question of why not simply employ Low-Rank OT instead? The key reason is computational: computing Low-Rank OT still requires access to the full $n\times n$ cost matrix between $\mu$ and $\nu$ as input, which entails quadratic time and memory. This completely undermines the goal of linear-time attention. In contrast, our conditional transport construction never forms the full cost matrix, but instead only evaluates $QZ^\top$ and $ZK^\top$, each of size $n\times r$ with $r\ll n$, achieving linear complexity in $n$ while preserving double-stochasticity. 

% Yikun: Low rank OT has linear solver iff $C=A^\top B$. here $C=\exp^{A\top B}$ and thus we do not have a faster solver. 

\vspace{-0.1in}
\section{Experiments\protect\footnotemark}
\footnotetext{Our code and implementation can be found \href{https://example.com}{here}.}

\subsection{Runtime and Wall-Clock Analysis}

We study the computational efficiency of \textsc{LOTFormer}, a doubly–stochastic \emph{linear-time} attention, against representative quadratic and linear baselines. The quadratic  group includes \emph{Softmax}, \emph{ESPFormer (SoftSort)}, \emph{ESPFormer (HardSort)}, and \emph{Sinkformer}. The linear group includes \emph{Performer}, \emph{Nyströmformer}, \emph{MobiAttention}, and \emph{PolaFormer}. To expose the speed/accuracy control that low-rank methods afford, we also plot \textsc{LOTFormer} at ranks $r\!\in\!\{16,32,64,128,256\}$; $r{=}64$ is shown with a solid line and the other ranks with dashed lines.

Figure~\ref{fig:lotformer-two-panel} reports forward-pass runtime (ms/iteration) versus sequence length on a log–log scale for $N\in\{2^{9:17}\}$. The left panel compares quadratic methods to \textsc{LOTFormer}; the right panel compares linear-time baselines to the same \textsc{LOTFormer} curves. As expected, Softmax and the DS variants exhibit near–quadratic growth in $N$, whereas \textsc{LOTFormer} and the linear baselines grow roughly linearly. Across \textsc{LOTFormer} ranks, lower $r$ yields faster curves and higher $r$ yields slower curves, illustrating a clear rank–time trade-off. We include an appendix note discussing how LOTFormer differs from FlashAttention at the kernel level, and why our current implementation is an unfused PyTorch baseline rather than a fused attention kernel. See Appendix ~\ref{app:flashattention} for the scope disclaimer.

\begin{figure*}[t]
  \centering
  \includegraphics[width=\textwidth]{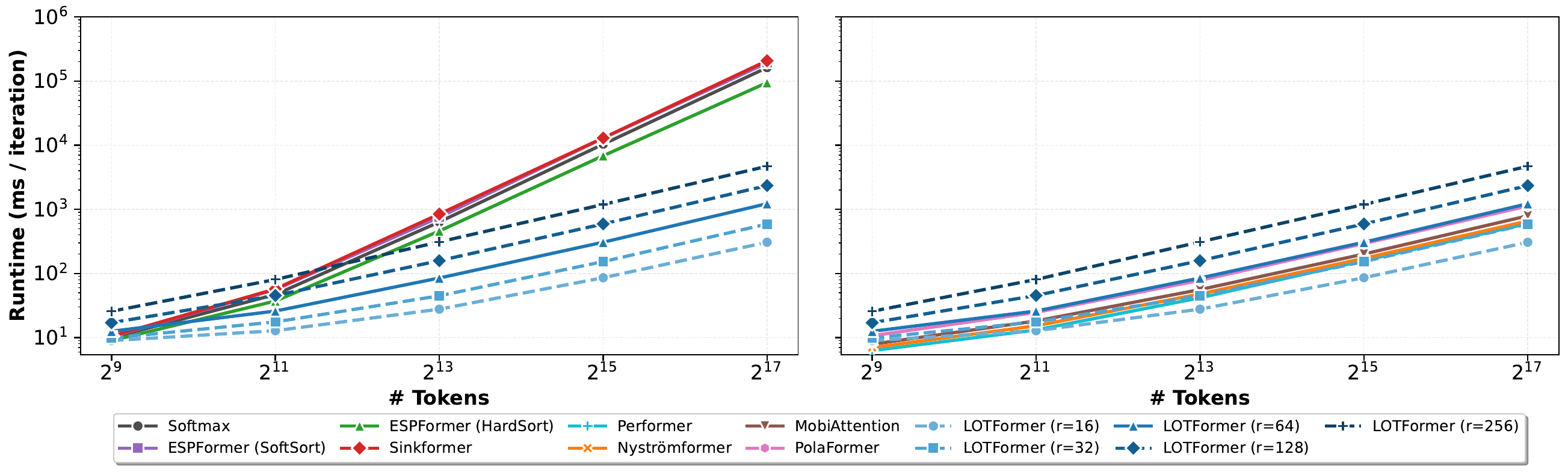}
  \caption{\textbf{Runtime scaling with sequence length.}
  Forward-pass runtime (ms/iteration) vs.\ $N$ on a log–log scale for quadratic methods (left) and linear methods (right). 
  \textsc{LOTFormer} is shown at ranks $r\!\in\!\{16,32,64,128,256\}$ in both panels (solid for $r{=}64$, dashed otherwise). 
  Points denote measured values at $N\!\in\!\{2^{9:17}\}$. \vspace{-0.15in}}
  \label{fig:lotformer-two-panel}
\end{figure*}

% \subsection{Long Range Arena}
% We evaluate our approach on the \emph{Long Range Arena (LRA)} benchmark, a widely used suite for testing the ability of sequence models to capture long-range dependencies across diverse modalities, including structured parsing (ListOps), natural language understanding (Text), image classification (Image), document retrieval (Retrieval), and synthetic reasoning (Pathfinder). Following PolaFormer \citep{meng2025polaformer} and FlattenAttention \citep{han2023flatten}, we optionally apply a channel-wise 1D depthwise convolution over the token dimension to the value stream prior to mixing: 
% \[
% \tilde{V}=\mathrm{DWConv}_k(V),\qquad O=A\,\tilde{V},
% \]
% where $A$ is the attention map (DS for non-\texttt{[CLS]} rows; \texttt{[CLS]} via its head). DWC injects a local inductive bias and improves token mixing at negligible parameter/FLOP cost. 

% \textbf{Results.} We evaluate LOTFormer on LRA with $r=32$ both with and without DWC and present our results in Table~\ref{tab:lra_results}. It can be seen that, without using a DWC, our method achieves the second highest average performance, beaten only by Polaformer \cite{meng2025polaformer} which utilizes a DWC. By adding the DWC, we significantly boost our performance on both the text and image tasks, achieving the best average performance among all methods while also achieving the best performance on the image task and second highest performance on the text task. 
\vspace{-0.1in}
\subsection{ImageNet-1K}

Next, we evaluate LOTFormer on ImageNet 1K at $224^2$ input resolution with DeiT Tiny \citep{touvron2021deit}, DeiT Small, PVT Tiny \citep{wang2021pyramid}, and Swin Tiny \citep{liu2021Swin}, and compare against recent efficient attention variants (Table~\ref{tab:imagenet_main_merged}). LOTFormer achieves the best Top 1 within the DeiT Tiny, DeiT Small, and PVT Tiny blocks, improving over the corresponding Softmax baselines from $72.2$ to $74.8$, from $79.8$ to $80.1$, and from $75.1$ to $79.6$, respectively, with comparable parameter counts and FLOPs. On Swin Tiny, LOTFormer attains $81.8$ Top 1, placing it as the top performer in most blocks and within the top three in the remaining block at the same resolution.

\noindent\textbf{\texttt{[CLS]} under DS attention.}\label{sec:cls-heads}
A practical issue when using doubly stochastic (DS) attention in ViTs is the \texttt{[CLS]} token: it must pool information from all image tokens, while DS constraints can, in principle, over-concentrate attention and weaken global aggregation. In the extreme case (vanishing entropy regularization and $r=n$), a DS attention matrix may approach a permutation, so each query attends to a single key, and \texttt{[CLS]} would aggregate from only one token. Although entropy regularization and low-rank structure mitigate this in practice, the tension remains conceptually.
To preserve the global pooling role of \texttt{[CLS]}, we keep a standard softmax aggregator for the \texttt{[CLS]} row and apply DS constraints only among the image tokens:
{
\setlength{\abovedisplayskip}{5pt}
\setlength{\belowdisplayskip}{7pt}
\setlength{\abovedisplayshortskip}{4pt}
\setlength{\belowdisplayshortskip}{6pt}
\begin{equation}
\alpha_{\mathrm{cls},j}
= \frac{\exp\!\big(\beta\,\langle q_{\mathrm{cls}}, k_j\rangle\big)}
        {\sum_{l}\exp\!\big(\beta\,\langle q_{\mathrm{cls}}, k_l\rangle\big)},\qquad \beta>0,
\end{equation}
}
while all non-\texttt{[CLS]} rows follow the DS attention rule. 

\noindent\textbf{\texttt{[CLS]} Polarization.} Following \cite{meng2025polaformer}, we split a scalar \(x\) into its positive and negative components via \(x^{+}=\mathrm{ReLU}(x)\) and \(x^{-}=\mathrm{ReLU}(-x)\). In LOTFormer, we optionally restrict this operation to the \texttt{[CLS]} token, so that only \texttt{[CLS]}-to-token (and token-to-\texttt{[CLS]}) interactions are polarized while all other token-to-token interactions remain unchanged, by defining:
\vspace{-0.08in}
\begin{equation}
\label{eq:cls_score_tilde}
\begin{aligned}
\tilde{s}_{\mathrm{cls},j}
&= \Big((q_{\mathrm{cls}}^+)^\top k_j^+ + (q_{\mathrm{cls}}^-)^\top k_j^-\Big)^{p_s} \\
&\quad + \Big((q_{\mathrm{cls}}^+)^\top k_j^- + (q_{\mathrm{cls}}^-)^\top k_j^+\Big)^{p_o},
\end{aligned}
\end{equation}
followed by $\alpha_{\mathrm{cls},j}=\mathrm{softmax}_j(\tilde{s}_{\mathrm{cls},j})$.
By contrast, PolaFormer applies the same polarized logit construction to all rows, followed by per-row softmax normalization.
In our setting, only the \texttt{[CLS]} row uses softmax (with optional polarization), while the remaining rows remain DS.

 % it is not optional, we should mention that this is exactlyt the setting that used in polaformer
\noindent\textbf{Reporting protocol.}
Table~\ref{tab:imagenet_main_merged} reports the best configuration per method at $224^2$ resolution for each backbone family. Several efficient attention baselines, most notably PolaFormer, include a depthwise convolution (DWC) token mixing layer as part of their reported architecture. To ensure fair comparisons, we run LOTFormer under the same setting when comparing against such methods, and we additionally report matched results without DWC. Table~3 isolates the effect of DWC under otherwise identical training and evaluation settings, together with ablations on the \texttt{[CLS]} aggregator and \texttt{[CLS]} only polarization. Across backbones, the dominant gain for doubly stochastic style attention comes from enabling a dedicated \texttt{[CLS]} softmax aggregator, while \texttt{[CLS]} only polarization provides a smaller but consistent improvement.

\noindent\textbf{Qualitative comparison of \texttt{[CLS]} attention.} In Fig.~\ref{fig:imagenet-r}, the \emph{left panel} shows the source image (first column), the standard \emph{softmax} \texttt{[CLS]} map (second), and \emph{LOTFormer} \texttt{[CLS]} maps for increasing pivot size $r$ (remaining columns, left$\to$right).
For small $r$, LOTFormer exhibits a \emph{clustering} effect in which attention splits across several coarse regions; as $r$ increases, the maps become progressively \emph{more object-centric}, concentrating on semantically relevant parts while retaining sufficient global coverage. The \emph{right panel} contrasts \texttt{[CLS]} treatments—full DS, \texttt{[CLS]}–softmax (tokens-only DS), and +Pola, highlighting how each modulates selectivity and spatial support.

\definecolor{lotblue}{RGB}{241,246,255}
\newcommand{\hicell}[1]{\cellcolor{lotblue}{#1}}

% Column widths: sum = 0.44+0.12+0.13+0.13+0.13 = 0.95\columnwidth (slack avoids overfull + bleed)
\begin{table}[t]
\centering
\caption{ImageNet 1K results at $224^2$ input resolution across four backbones. We report parameter count, FLOPs, and Top 1 accuracy. Best Top 1 within each backbone block is in \textbf{bold}. \vspace{-.1in}}
\label{tab:imagenet_main_merged}
\footnotesize
\setlength{\tabcolsep}{2.0pt}
\renewcommand{\arraystretch}{1.02}

\begin{tabular}{@{}%
>{\raggedright\arraybackslash}p{0.42\columnwidth}
>{\centering\arraybackslash}p{0.12\columnwidth}
>{\centering\arraybackslash}p{0.13\columnwidth}
>{\centering\arraybackslash}p{0.13\columnwidth}
>{\centering\arraybackslash}p{0.13\columnwidth}@{}}
\toprule
Method & Input & Params & FLOPs & Top 1 \\
\midrule

\multicolumn{5}{@{}c@{}}{\textbf{DeiT Tiny}} \\
\midrule
Softmax$_{\text{\citep{touvron2021deit}}}$                       & $224^2$ & 5.7M  & 1.1G & 72.2 \\
EfficientAttn$_{\text{\citep{shen2021efficientattention}}}$
                 & $224^2$ & 5.7M  & 1.1G & 70.2 \\
HydraAttn$_{\text{\citep{bolya2022hydraattentionefficientattention}}}$                      & $224^2$ & 5.7M  & 1.1G & 68.3 \\
EnhancedAttn$_{\text{\citep{cai2022efficientvit}}}$                   & $224^2$ & 5.8M  & 1.1G & 72.9 \\
FLattenAttn$_{\text{\citep{han2023flatten}}}$                     & $224^2$ & 6.1M  & 1.1G & 74.1 \\
AngularAttn$_{\text{\citep{you2024castlingvitcompressingselfattentionswitching}}}$                     & $224^2$ & 5.7M  & 1.1G & 70.8 \\
MobiAttn$_{\text{\citep{pmlr-v235-yao24c}}}$                       & $224^2$ & 5.7M  & 1.2G & 73.3 \\
PolaFormer$_{\text{\citep{meng2025polaformer}}}$                     & $224^2$ & 6.1M  & 1.2G & 74.6 \\
Sinkformer$_{\text{\citep{sinkformer}}}$                     & $224^2$ & 5.7M  & 1.2G & 70.2 \\
ESPFormer$_{\text{\citep{shahbazi2025espformer}}}$ & $224^2$ & 5.7M  & 1.2G & 70.1 \\
\hicell{LOTFormer}  & \hicell{$224^2$} & \hicell{5.8M} & \hicell{1.2G} & \hicell{\textbf{74.8}} \\
\midrule

\multicolumn{5}{@{}c@{}}{\textbf{DeiT Small}} \\
\midrule
Softmax$_{\text{\citep{touvron2021deit}}}$                      & $224^2$ & 22M & 4.6G & 79.8 \\
PolaFormer$_{\text{\citep{meng2025polaformer}}}$                  & $224^2$ & 23M & 4.8G & 79.8 \\
\hicell{LOTFormer} & \hicell{$224^2$} & \hicell{22.3M} & \hicell{4.8G} & \hicell{\textbf{80.1}} \\
\midrule

\multicolumn{5}{@{}c@{}}{\textbf{PVT Tiny}} \\
\midrule
Softmax$_{\text{\citep{wang2021pyramid}}}$                          & $224^2$ & 13.0M & 1.9G & 75.1 \\
PolaFormer$_{\text{\citep{meng2025polaformer}}}$             & $224^2$ & 12.0M & 2.0G & 78.8 \\
\hicell{LOTFormer}     & \hicell{$224^2$} & \hicell{12.2M} & \hicell{2.1G} & \hicell{\textbf{79.6}} \\
\midrule

\multicolumn{5}{@{}c@{}}{\textbf{Swin Tiny}} \\
\midrule
Softmax$_{\text{\citep{liu2021Swin}}}$                           & $224^2$ & 28.0M & 4.4G & 81.2 \\
HydraAttn$_{\text{\citep{bolya2022hydraattentionefficientattention}}}$                 & $224^2$ & 29.0M & 4.5G & 80.7 \\
EfficientAttn$_{\text{\citep{shen2021efficientattention}}}$             & $224^2$ & 29.0M & 4.5G & 81.0 \\
AngularAttn$_{\text{\citep{you2024castlingvitcompressingselfattentionswitching}}}$         & $224^2$ & 29.0M & 4.5G & 79.4 \\
EnhancedAttn$_{\text{\citep{cai2022efficientvit}}}$              & $224^2$ & 29.0M & 4.5G & \underline{81.8} \\
FLattenAttn$_{\text{\citep{han2023flatten}}}$               & $224^2$ & 29.0M & 4.5G & \underline{82.1} \\
PolaFormer$_{\text{\citep{meng2025polaformer}}}$                & $224^2$ & 29.0M & 4.5G & \textbf{82.6} \\
\hicell{LOTFormer}        & \hicell{$224^2$} & \hicell{28.0M} & \hicell{4.5G} & \underline{\hicell{81.8}} \\
\bottomrule
\end{tabular}
\vspace{-0.3in}
\end{table}

\subsection{Long Range Arena}
We evaluate LOTFormer on the \emph{Long Range Arena (LRA)} benchmark, which stresses long context modeling across five modalities: structured parsing (ListOps), text classification (Text), image classification (Image), document retrieval (Retrieval), and synthetic reasoning (Pathfinder). To make the comparison as controlled as possible, we follow the same LRA training and preprocessing recipe used by Skyformer, including the two layer Transformer architecture and matching optimization settings.

\textbf{Results and comparison to linear attention and state space models.}
Table~\ref{tab:lra_results} summarizes LRA performance for LOTFormer and a broad set of efficient baselines. We focus on recent linear time attention methods and modern state space models, and include strong SSM references such as DeltaNet \citep{yang2025parallelizinglineartransformersdelta} and Mamba2 \citep{dao2024transformersssmsgeneralizedmodels}, which are competitive alternatives on long sequences. Following common LRA recipes, we report LOTFormer both in the plain Transformer form and with an DWC token mixing layer, since several established baselines rely on this component as part of their model design. For a direct and controlled comparison, we adopt the same Transformer configuration and optimization choices used in Skyformer\citep{chen2021skyformer}. Across tasks, LOTFormer matches or improves upon prior baselines under the same protocol and remains competitive with strong SSM alternatives. When DWC is enabled, LOTFormer attains its best average performance, indicating that low rank doubly stochastic normalization provides a practical linear time route to stable long context modeling on LRA.

\newcommand{\DWCmark}{\textsuperscript{\scriptsize$\dagger$}}
\newcommand{\pmstd}{\;\pm\;}

\begin{table}[t]
\centering
\caption{LRA results. LOTFormer numbers are means over three runs. Best per column in bold. $\dagger$ indicates +DWC. \vspace{-.1in}}
\label{tab:lra_results}
\footnotesize
\setlength{\tabcolsep}{1.2pt} % tighten a bit more to stay within margins
\renewcommand{\arraystretch}{1.15}

% Make numeric columns fixed-width so total width is guaranteed <= \columnwidth
\begin{tabular*}{\columnwidth}{@{\extracolsep{0pt}}%
>{\raggedright\arraybackslash}p{0.25\columnwidth}
*{6}{>{\centering\arraybackslash}p{0.116\columnwidth}}@{}}
\toprule
Model & Text & List & Ret & Path & Img & Avg \\
\midrule
Transformer   & 61.55 & 38.71 & 80.93 & 70.39 & 39.14 & 58.14 \\
LocalAttn     & 52.98 & 15.82 & 53.39 & 66.63 & 41.46 & 46.06 \\
LinearTrans.  & 65.90 & 16.13 & 53.09 & 75.30 & 42.34 & 50.55 \\
Reformer      & 56.10 & 37.27 & 53.40 & 68.50 & 38.07 & 50.67 \\
Performer     & 65.40 & 18.01 & 53.82 & \textbf{77.05} & 42.77 & 51.41 \\
Synthesizer   & 61.68 & 36.99 & 54.67 & 69.45 & 41.61 & 52.88 \\
Longformer    & 62.85 & 35.63 & 56.89 & 69.71 & 42.22 & 53.46 \\
Informer      & 62.13 & 37.05 & 79.35 & 56.44 & 37.86 & 54.57 \\
Bigbird       & 64.02 & 36.05 & 59.29 & 74.87 & 40.83 & 55.01 \\
Linformer     & 57.29 & 36.44 & 77.85 & 65.39 & 38.43 & 55.08 \\
Kernelized    & 60.02 & 38.46 & 82.11 & 69.86 & 32.63 & 56.62 \\
Cosformer     & 63.54 & 37.20 & 80.28 & 70.00 & 35.84 & 57.37 \\
Nystrom       & 62.36 & 37.95 & 80.89 & 69.34 & 38.94 & 57.90 \\
Skyformer     & 64.70 & 38.69 & 82.06 & 70.73 & 40.77 & 59.39 \\
Hedgehog      & 64.60 & 37.15 & 82.24 & 74.16 & 40.15 & 59.66 \\
PolaFormer$_{\alpha=3}$          & 59.50 & \textbf{38.80} & 79.50 & 67.50 & 42.00 & 57.46 \\
PolaFormer$_{\alpha=3}$\DWCmark  & 73.06 & 37.35 & 80.50 & 70.53 & 42.15 & 60.72 \\
\midrule
DeltaNet  & 74.34 & 38.16 & \textbf{83.51} & 73.37 & 36.55 & 61.19 \\
Mamba2  & \textbf{78.11} & 37.86 & 82.66 & 70.14 & 42.28 & 62.21 \\
\midrule
LOTFormer              & 65.2 & 38.5 & 80.4 & 73.2 & 45.7 & 60.60 \\
LOTFormer\DWCmark      & 71.1 & 38.5 & 80.9 & 69.9 & \textbf{54.1} & \textbf{62.90} \\
\bottomrule
\end{tabular*}

\vspace{-0.25in}
\end{table}

\begin{table*}[t]
\centering
\tiny
\setlength{\tabcolsep}{2.6pt} % tighter columns
\renewcommand{\arraystretch}{0.95}
\caption{Side-by-side ablation on DeiT-Tiny / ImageNet-1K. “Pola type” indicates where polarization is applied:
None, All tokens (PolaFormer), or \texttt{[CLS]}-only (LOTFormer). $\Delta$ is computed within each block relative to the previous row.\vspace{-.1in}}
% PolaFormer and FlattenAttention rows use a \emph{softmax} kernel (logits + softmax).}
\label{tab:two-part-ablation-polatype}
\resizebox{\linewidth}{!}{%
\begin{tabular}{@{}ccccc c ccccc c ccccc@{}}
\toprule
\multicolumn{5}{c}{\textbf{PolaFormer}} & & 
\multicolumn{5}{c}{\textbf{FlattenAttention}} & &
\multicolumn{5}{c}{\textbf{LOTFormer ($r{=}32$)}} \\
\cmidrule(lr){1-5}\cmidrule(lr){7-11}\cmidrule(l){13-17}
\texttt{[CLS]}-Softmax & Pola type & DWC & Top-1 (\%) & $\Delta$
& & \texttt{[CLS]}-Softmax & Pola type & DWC & Top-1 (\%) & $\Delta$
& & \texttt{[CLS]}-Softmax & Pola type & DWC & Top-1 (\%) & $\Delta$ \\
\midrule
\checkmark & All  & \xmark     & 61.9           & 0.0
& & \checkmark & None & \xmark     & 71.8 & 0.0
& & \xmark     & None & \xmark     & 68.2           & 0.0 \\
\checkmark & All  & \checkmark & 74.6           & {+12.7}
& & \checkmark & None & \checkmark & 74.1 &     +2.3
& & \checkmark & None & \xmark     & 73.2           & {+5.0} \\
    &      &             &                 &
& &        &      &             &                 &
& & \checkmark & \texttt{[CLS]}  & \xmark     & 73.6           & {+0.4} \\
    &      &             &                 &
& &        &      &             &                 &
& & \checkmark & \texttt{[CLS]}  & \checkmark & \textbf{74.8}  & {+1.2} \\
\bottomrule
\end{tabular}%
}
\vspace{-0.15in}
\end{table*}

% \begin{table*}[h]
%   \centering
%   \caption{Plug-and-Play conversion on fine-tuned GLUE using \textsc{BERT}-base.
%   We replace encoder bidirectional self-attention blocks only while keeping other parameters unchanged.
%   Percent recovery is computed relative to the fine-tuned baseline.\vspace{-0.1in}}
%   \label{tab:glue_plug_and_play}
%   \setlength{\tabcolsep}{4pt}
%   \renewcommand{\arraystretch}{1.05}
%   \small
%   \begin{tabular}{@{} l rrrrrrrr r @{}}
%     \toprule
%     Method & CoLA & SST2 & MRPC & STS\textendash B & QQP & MNLI & QNLI & RTE & (\%) Recover \\
%     \midrule
%     \textsc{BERT}$_{\text{FT}}$ & 58.8 & 93.2 & 90.2 & 88.8 & 91.0 & 84.7 & 91.3 & 68.2 & 100.0 \\
%     \midrule
%     T2R                 & 43.6 & 87.7 & 83.0 & 78.6 & 86.7 & 78.9 & 84.6 & 54.1 & 88.9 \\
%     T2R\textendash HH   & 56.9 & 90.9 & 89.1 & 77.7 & 90.0 & 77.4 & 84.5 & 56.3 & 93.5 \\
%     Hedgehog            & 59.2 & 92.6 & 90.1 & 87.4 & 91.0 & 82.6 & 89.6 & 69.3 & 99.3 \\
%     \midrule
%     \textbf{LOTFormer} & 59.1 & 92.5 & 90.3 & 89.7 & 90.5 & 83.0 & 90.2 & 68.5 & \textbf{99.7} \\
%     \bottomrule
%   \end{tabular}
% \end{table*}
\begin{table}[t]
  \centering
  \caption{Plug-and-Play conversion on fine-tuned GLUE using \textsc{BERT}-base.
  We replace encoder bidirectional self-attention blocks only while keeping other parameters unchanged.
  Percent recovery is computed relative to the fine-tuned baseline.\vspace{-0.08in}}
  \label{tab:glue_plug_and_play}
  \setlength{\tabcolsep}{2.5pt}
  \renewcommand{\arraystretch}{1.05}
  \scriptsize
  \resizebox{0.95\columnwidth}{!}{%
  \begin{tabular}{@{} c|c|cccc @{}}
    \toprule
    Task & \textsc{BERT}$_{\text{FT}}$ & T2R & T2R\textendash HH & Hedgehog & \textbf{Ours} \\
    \midrule
    CoLA        & 58.8 & 43.6 & 56.9 & 59.2 & 59.1 \\
    SST2        & 93.2 & 87.7 & 90.9 & 92.6 & 92.5 \\
    MRPC        & 90.2 & 83.0 & 89.1 & 90.1 & 90.3 \\
    STS\textendash B & 88.8 & 78.6 & 77.7 & 87.4 & 89.7 \\
    QQP         & 91.0 & 86.7 & 90.0 & 91.0 & 90.5 \\
    MNLI        & 84.7 & 78.9 & 77.4 & 82.6 & 83.0 \\
    QNLI        & 91.3 & 84.6 & 84.5 & 89.6 & 90.2 \\
    RTE         & 68.2 & 54.1 & 56.3 & 69.3 & 68.5 \\
    \midrule
    Recover (\%)& 100.0 & 88.9 & 93.5 & 99.3 & \textbf{99.7} \\
    \bottomrule
  \end{tabular}%
  }
\vspace{-0.12in}
\end{table}

\vspace{-0.15in}
\subsection{Plug-and-Play on Text benchmarks}
\vspace{-0.05in}
\label{subsec:text_benchmarks}
A central goal of this work is to make doubly stochastic attention usable \emph{as a drop-in replacement} in existing Transformer checkpoints. To this end, our Plug-and-Play evaluation starts from a \emph{trained} model and performs a surgical conversion: we replace \textbf{only the encoder bidirectional self-attention operator} with a candidate attention mechanism, while keeping the rest of the network \emph{bitwise unchanged} (token embeddings, FFNs, layer norms, residual connections, positional encodings, and task-specific heads). This protocol treats attention as a modular component and isolates its contribution without confounding changes from retraining.
For encoder--decoder models, we leave the decoder causal self-attention and encoder--decoder cross-attention untouched, and convert only the encoder self-attention blocks.

\noindent\textbf{Neural Machine Translation.}
We test drop-in conversion on IWSLT'14 De$\rightarrow$En~\cite{cettolo2014iwslt} using two standard backbones implemented in \texttt{fairseq}~\cite{ott2019fairseq}: the Transformer and DiffTransformer~\cite{ye2025differential}, each with a 6-layer encoder and 6-layer decoder. We compare LOTFormer to two recent doubly stochastic baselines, Sinkformer and ESPFormer, under identical conversion rules.
Our evaluation has two stages. First, we train each backbone for 25 epochs and then apply \emph{Plug-and-Play} conversion by swapping \textbf{encoder bidirectional self-attention} with LOTFormer, ESPFormer, or Sinkformer, and evaluating \emph{with no additional training}. This directly measures whether the attention operator can be deployed as a compatible module in an existing checkpoint. Second, we optionally allow a \emph{Fine-Tune Boost} phase, fine-tuning the converted model for 10 additional epochs to quantify the headroom available once the model adapts to the new attention. Results are reported in Table~\ref{tab:nmt_transformer_vs_difftransformer}.

\noindent\textbf{GLUE conversion on fine-tuned \textsc{BERT}.}
For GLUE \citep{wang2019gluemultitaskbenchmarkanalysis}, we follow the same fine-tuned conversion protocol used in Hedgehog \citep{zhang2024hedgehog} and Transformer-to-RNN (T2R) \citep{kasai2021finetuningpretrainedtransformersrnns}: starting from a task fine-tuned \textsc{BERT}-base \citep{devlin2019bertpretrainingdeepbidirectional} checkpoint, we swap all encoder bidirectional self-attention blocks with LOTFormer, run a short conversion stage using an distillation objective, and then fine-tune on the original task to measure recovery relative to softmax. 

\begin{table}[t]
  \centering
  \caption{Plug and Play and Fine Tune performance on IWSLT'14 De$\rightarrow$En (median over 4 runs). $^{\star}$ marks Plug and Play when swapping in an attention module different from the base model's own. $\Delta$ is measured relative to the base model within each block.\vspace{-0.1in}}
  \label{tab:nmt_transformer_vs_difftransformer}
  \setlength{\tabcolsep}{5pt}
  \renewcommand{\arraystretch}{1.2}
  \small

  \begin{tabular}{@{} l c c c @{}}
    \toprule
        Model & Plug and Play & Fine Tune & $\Delta$ \\
    \midrule
    \multicolumn{4}{@{}c@{}}{\emph{Base: Transformer}} \\
    \midrule
    Transformer      & \textbf{33.40}$^{\phantom\star}$   & 34.61          & --- \\
    Sinkformer       & 33.36$^{\star}$ & 34.61          & +0.00 \\
    ESPFormer        & 33.38$^{\star}$ & 34.64          & +0.03 \\
    LOTFormer (ours) & 33.29$^{\phantom\star}$            & \textbf{34.72} & \textbf{+0.11} \\
    \midrule
    \multicolumn{4}{@{}c@{}}{\emph{Base: DiffTransformer}} \\
    \midrule
    Transformer      & \textbf{33.85}$^{\phantom\star}$   & 34.78          & --- \\
    Sinkformer       & 33.67$^{\star}$ & 34.81          & +0.03 \\
    ESPFormer        & 33.72$^{\star}$ & 34.83          & +0.05 \\
    LOTFormer (ours) & 33.42$^{\phantom\star}$            & \textbf{34.91} & \textbf{+0.13} \\
    \bottomrule
  \end{tabular}
  \vspace{-0.25in}
\end{table}

\vspace{-0.1in}
\subsection{Ablation Studies}
\vspace{-0.08in}
\label{subsec:ablations}

Tables~\ref{tab:lotformer_ablation_vit} and~\ref{tab:lotformer_r_ablation_deit} study key design choices for LOTFormer under two representative settings.

\noindent\textbf{Sinkhorn settings and the reference measure on ViT.} Table~\ref{tab:lotformer_ablation_vit} varies the number of Sinkhorn iterations $T$, the entropic strength $\varepsilon$, and the choice of reference measure $\mathsf{ref}$, either fixed or learnable and initialized to uniform. Experiments use ImageNet 100 with a reduced ViT backbone that has half the hidden dimension.
For tighter regularization with $\varepsilon{=}0.1$, accuracy improves as $T$ increases, which is consistent with needing more iterations to approach a well balanced coupling. For $\varepsilon{=}1$, performance saturates earlier and the learnable $\mathsf{ref}$ is consistently stronger, reaching its best value at $T{=}10$. Sweeping $\varepsilon$ at fixed $T{=}10$ shows a clear optimum near $\varepsilon{=}1$, with degraded accuracy at both extremes. Very small $\varepsilon$ is harder to optimize under a fixed iteration budget, while very large $\varepsilon$ over smooths the transport plan. Overall, making $\mathsf{ref}$ learnable matches or improves on a fixed reference across all settings, and we use it in our strongest configurations.

\noindent\textbf{Pivot size $r$ on DeiT Tiny.}
Table~\ref{tab:lotformer_r_ablation_deit} quantifies the accuracy and cost trade off as we vary the pivot size $r$ for a strong DeiT Tiny configuration on ImageNet 1K. Increasing $r$ improves Top 1 up to a point, peaking at $r{=}32$ and then slightly decreasing at $r{=}64$. As expected from the $O(nr)$ structure, both forward and backward time increase monotonically with $r$. This underscores that $r$ is a single knob controlling the accuracy--efficiency tradeoff, motivating per backbone tuning of $r$ under a fixed compute budget.

\begin{table}[H]
\centering
\caption{Ablations for LOTFormer on a ViT backbone. Top 1 accuracy (\%) on ImageNet 100.\vspace{-0.06in}}
\label{tab:lotformer_ablation_vit}
\footnotesize

\textbf{(i) Vary $T$ at $\varepsilon\in\{0.1,1\}$}\\[-2pt]
\setlength{\tabcolsep}{3.5pt}
\resizebox{\columnwidth}{!}{%
\begin{tabular*}{\columnwidth}{@{\extracolsep{\fill}} l c S S S S @{}}
\toprule
 &  & {$T{=}1$} & {$T{=}5$} & {$T{=}10$} & {$T{=}20$} \\
\midrule
\multirow{2}{*}{Fixed $\mathsf{ref}$}
  & $\varepsilon{=}0.1$ & 79.6 & 80.4 & 80.8 & 80.9 \\
  & $\varepsilon{=}1$   & 80.2 & 81.2 & 81.4 & 81.7 \\
\midrule
\multirow{2}{*}{Learnable $\mathsf{ref}$}
  & $\varepsilon{=}0.1$ & 79.8 & 80.7 & 81.1 & 81.1 \\
  & $\varepsilon{=}1$   & 80.5 & 81.5 & \textbf{82.1} & 82.0 \\
\bottomrule
\end{tabular*}}

\vspace{0.8ex}

\textbf{(ii) Vary $\varepsilon$ at $T{=}10$}\\[-2pt]
\resizebox{\columnwidth}{!}{%
\begin{tabular*}{\columnwidth}{@{\extracolsep{\fill}} l S S S S S @{}}
\toprule
 & {$\varepsilon$=$0.01$} & {$\varepsilon$=$0.1$} & {$\varepsilon$=$1$} & {$\varepsilon$=$10$} & {$\varepsilon$=$100$} \\
\midrule
Fixed $\mathsf{ref}$     & 79.1 & 80.8 & 81.4 & 81.2 & 79.5 \\
Learnable $\mathsf{ref}$ & 79.2 & 81.1 & \textbf{82.1} & 81.3 & 79.6 \\
\bottomrule
\end{tabular*}}

\vspace{-0.15in}
\end{table}

\begin{table}[H]
\centering
\vspace{-0.1in}
\caption{Effect of pivot size $r$ for the best LOTFormer setting (Pol \texttt{[CLS]} plus DWC) on ImageNet 1K using DeiT Tiny. Forward and backward times are per training step on a 4$\times$GPU setup.\vspace{-0.1in}}
\label{tab:lotformer_r_ablation_deit}
\footnotesize
\setlength{\tabcolsep}{3.5pt}
\resizebox{\columnwidth}{!}{%
\begin{tabular}{@{}lccccc@{}}
\toprule
 & $r{=}4$ & $r{=}8$ & $r{=}16$ & $r{=}32$ & $r{=}64$ \\
\midrule
Top 1 (\%)              & 64.5 & 66.4  & 68.8  & \textbf{74.8} & 73.9 \\
Forward time (ms/iter)  & 64.8  & 74.3  & 90.2  & 122.1  & 183.4 \\
Backward time (ms/iter) & 160.5 & 165.3 & 183.4 & 227.8  & 244.6 \\
\bottomrule
\end{tabular}}
\vspace{-0.25in}
\end{table}

\vspace{-.1in}
\section{Conclusion}
\vspace{-0.08in}
We introduced \textsc{LOTFormer}, a transport based attention mechanism that is both linear time and doubly stochastic. By conditioning on a learnable low rank pivot measure, LOTFormer composes two entropic OT couplings, to induce a rank $r$ attention operator that applies to values in $O(nr)$ time without ever forming an $n\times n$ attention matrix. Across settings, LOTFormer delivers strong empirical results: it is competitive on ImageNet 1K under standard training recipes, it enables direct plug and play replacement of softmax attention in pretrained checkpoints by swapping only the attention blocks, and it achieves superior performance on Long Range Arena. Overall, these results show that transport structured normalization can be practical at scale, providing an efficient path to doubly stochastic attention without sacrificing accuracy.

% \newpage
% \clearpage
\section*{Impact Statement}
This work proposes LOTFormer, a linear time doubly stochastic attention operator derived from an optimal transport factorization that avoids forming an $n\times n$ attention map and is designed to be used as a drop in module. Its main intended impact is practical: under a Plug and Play protocol we can convert trained checkpoints by swapping only encoder bidirectional self attention while keeping the rest of the network fixed, enabling cheaper and more controlled comparisons of attention mechanisms.The approach targets bidirectional encoder attention, since standard causal masking makes doubly stochastic constraints degenerate. As with most efficiency improvements, lower cost can reduce per run compute but can also increase overall usage, and it may indirectly enable misuse, so deployment should follow standard dataset and application safety practices.

% \section*{Acknowledgments}
% SK acknowledges support from the NSF CAREER Award No.\ 2339898 and the Wellcome Leap “Surgery: Assess/Validate/Expand (SAVE)” program. KH was partially supported by a Research Enhancement Program grant from the College of Science at the University of Texas at Arlington and by the Army Research Office under Grant W911NF-23-1-0213. The views and conclusions contained herein are those of the authors and should not be interpreted as representing the official policies, either expressed or implied, of the Army Research Office or the U.S.\ Government. The U.S.\ Government is authorized to reproduce and distribute reprints for Government purposes notwithstanding any copyright notation herein.

% In the unusual situation where you want a paper to appear in the
% references without citing it in the main text, use \nocite
\nocite{langley00}

\bibliography{example_paper}

@article{liu2023visual,
  title={Visual instruction tuning},
  author={Liu, Haotian and Li, Chunyuan and Wu, Qingyang and Lee, Yong Jae},
  journal={Advances in neural information processing systems},
  volume={36},
  pages={34892--34916},
  year={2023}
}

@inproceedings{ye2025differential,
  title     = {Differential Transformer},
  author    = {Ye, Tianzhu and Dong, Li and Xia, Yuqing and Sun, Yutao and Zhu, Yi and Huang, Gao and Wei, Furu},
  booktitle = {The Thirteenth International Conference on Learning Representations},
  year      = {2025},
  url       = {https://openreview.net/forum?id=OvoCm1gGhN}
}

@inproceedings{vaswani2017attention,
  author    = {Vaswani, Ashish and Shazeer, Noam and Parmar, Niki and Uszkoreit, Jakob and Jones, Llion and Gomez, Aidan N. and Kaiser, {\L}ukasz and Polosukhin, Illia},
  title     = {Attention is All You Need},
  booktitle = {Advances in Neural Information Processing Systems},
  editor    = {Guyon, I. and Von Luxburg, U. and Bengio, S. and Wallach, H. and Fergus, R. and Vishwanathan, S. and Garnett, R.},
  publisher = {Curran Associates, Inc.},
  volume    = {30},
  year      = {2017}
}

@inproceedings{touvron2021deit,
  title     = {Training Data-Efficient Image Transformers \& Distillation Through Attention},
  author    = {Touvron, Hugo and Cord, Matthieu and Douze, Matthijs and Massa, Francisco and Sablayrolles, Alexandre and J{\'e}gou, Herv{\'e}},
  booktitle = {International Conference on Machine Learning},
  volume    = {139},
  pages     = {10347--10357},
  year      = {2021}
}

@article{choromanski2020rethinking,
  title   = {Rethinking Attention with Performers},
  author  = {Choromanski, Krzysztof and Likhosherstov, Valerii and Dohan, David and Song, Xingyou and Gane, Andreea and Sarlos, Tamas and Hawkins, Peter and Davis, Jared and Mohiuddin, Afroz and Kaiser, Lukasz and others},
  journal = {arXiv preprint arXiv:2009.14794},
  year    = {2020}
}

@article{wang2020linformer,
  title   = {Linformer: Self-attention with Linear Complexity},
  author  = {Wang, Sinong and Li, Belinda Z and Khabsa, Madian and Fang, Han and Ma, Hao},
  journal = {arXiv preprint arXiv:2006.04768},
  year    = {2020}
}

@inproceedings{katharopoulos2020transformers,
  title     = {Transformers are RNNs: Fast Autoregressive Transformers with Linear Attention},
  author    = {Katharopoulos, Angelos and Vyas, Apoorv and Pappas, Nikolaos and Fleuret, Fran{\c{c}}ois},
  booktitle = {International Conference on Machine Learning},
  pages     = {5156--5165},
  year      = {2020},
  organization = {PMLR}
}

@inproceedings{han2023flatten,
  title     = {FLatten Transformer: Vision Transformer Using Focused Linear Attention},
  author    = {Han, Dongchen and Pan, Xuran and Han, Yizeng and Song, Shiji and Huang, Gao},
  booktitle = {Proceedings of the IEEE/CVF International Conference on Computer Vision},
  pages     = {5961--5971},
  year      = {2023}
}

@inproceedings{sander2022sinkformers,
  title     = {Sinkformers: Transformers with Doubly Stochastic Attention},
  author    = {Sander, Michael E and Ablin, Pierre and Blondel, Mathieu and Peyr{\'e}, Gabriel},
  booktitle = {International Conference on Artificial Intelligence and Statistics},
  pages     = {3515--3530},
  year      = {2022},
  organization = {PMLR}
}

@article{sinkhorn1964relationship,
  title     = {A Relationship Between Arbitrary Positive Matrices and Doubly Stochastic Matrices},
  author    = {Sinkhorn, Richard},
  journal   = {The Annals of Mathematical Statistics},
  volume    = {35},
  number    = {2},
  pages     = {876--879},
  year      = {1964},
  publisher = {JSTOR}
}

@article{peyre2019computational,
  title     = {Computational Optimal Transport: With Applications to Data Science},
  author    = {Peyr{\'e}, Gabriel and Cuturi, Marco},
  journal   = {Foundations and Trends{\textregistered} in Machine Learning},
  volume    = {11},
  number    = {5-6},
  pages     = {355--607},
  year      = {2019},
  publisher = {Now Publishers, Inc.}
}

@inproceedings{liu2025estp,
  title     = {Expected Sliced Transport Plans},
  author    = {Liu, Xinran and Mart{\'\i}n, Roc{\'\i}o D{\'\i}az and Bai, Yikun and Shahbazi, Ashkan and Thorpe, Matthew and Aldroubi, Akram and Kolouri, Soheil},
  booktitle = {The Thirteenth International Conference on Learning Representations},
  year      = {2025},
  url       = {https://openreview.net/forum?id=P7O1Vt1BdU}
}

@inproceedings{ott2019fairseq,
  title     = {fairseq: A Fast, Extensible Toolkit for Sequence Modeling},
  author    = {Ott, Myle and Edunov, Sergey and Grangier, David and Le, Quoc V.},
  booktitle = {Proceedings of NAACL-HLT 2019: Demonstrations},
  year      = {2019}
}

@inproceedings{cettolo2014iwslt,
  title     = {The IWSLT 2014 Evaluation Campaign},
  author    = {Cettolo, Claudio and Niehues, Martin and Federico, Marcello},
  booktitle = {Proceedings of the International Workshop on Spoken Language Translation (IWSLT)},
  year      = {2014}
}

@article{khan2022transformers,
  title     = {Transformers in Vision: A Survey},
  author    = {Khan, Salman and Naseer, Muzammal and Hayat, Munawar and Zamir, Syed Waqas and Khan, Fahad Shahbaz and Shah, Mubarak},
  journal   = {ACM Computing Surveys (CSUR)},
  volume    = {54},
  number    = {10s},
  pages     = {1--41},
  year      = {2022},
  publisher = {ACM}
}

@article{lin2022survey,
  title     = {A Survey of Transformers},
  author    = {Lin, Tianyang and Wang, Yuxin and Liu, Xiangyang and Qiu, Xipeng},
  journal   = {AI Open},
  volume    = {3},
  pages     = {111--132},
  year      = {2022},
  publisher = {Elsevier}
}

@article{grattafiori2024llama,
  title   = {The Llama 3 Herd of Models},
  author  = {Grattafiori, Aaron and Dubey, Abhimanyu and Jauhri, Abhinav and Pandey, Abhinav and Kadian, Abhishek and Al-Dahle, Ahmad and Letman, Aiesha and Mathur, Akhil and Schelten, Alan and Vaughan, Alex and others},
  journal = {arXiv preprint arXiv:2407.21783},
  year    = {2024}
}

@article{gulati2020conformer,
  title   = {Conformer: Convolution-augmented Transformer for Speech Recognition},
  author  = {Gulati, Anmol and Qin, James and Chiu, Chung-Cheng and Parmar, Niki and Zhang, Yu and Yu, Jiahui and Han, Wei and Wang, Shibo and Zhang, Zhengdong and Wu, Yonghui and others},
  journal = {arXiv preprint arXiv:2005.08100},
  year    = {2020}
}

@inproceedings{shen2021efficientattention,
  author    = {Shen, Zhuoran and Zhang, Mingyuan and Zhao, Haiyu and Yi, Shuai and Li, Hongsheng},
  title     = {Efficient Attention: Attention With Linear Complexities},
  booktitle = {Proceedings of the IEEE/CVF Winter Conference on Applications of Computer Vision (WACV)},
  pages     = {3531--3539},
  year      = {2021}
}

@article{chen2021skyformer,
  title   = {Skyformer: Remodel Self-attention with Gaussian Kernel and Nystr{\"o}m Method},
  author  = {Chen, Yifan and Zeng, Qi and Ji, Heng and Yang, Yun},
  journal = {Advances in Neural Information Processing Systems},
  volume  = {34},
  pages   = {2122--2135},
  year    = {2021}
}

@inproceedings{xiong2021nystromformer,
  title     = {Nystr{\"o}mformer: A Nystr{\"o}m-based Algorithm for Approximating Self-attention},
  author    = {Xiong, Yunyang and Zeng, Zhanpeng and Chakraborty, Rudrasis and Tan, Mingxing and Fung, Glenn and Li, Yin and Singh, Vikas},
  booktitle = {Proceedings of the AAAI Conference on Artificial Intelligence},
  volume    = {35},
  number    = {16},
  pages     = {14138--14148},
  year      = {2021}
}

@inproceedings{zhang2024hedgehog,
  title     = {The Hedgehog \& the Porcupine: Expressive Linear Attentions with Softmax Mimicry},
  author    = {Zhang, Michael and Bhatia, Kush and Kumbong, Hermann and R{\'e}, Christopher},
  booktitle = {The Twelfth International Conference on Learning Representations},
  year      = {2024},
  url       = {https://openreview.net/forum?id=4g02l2N2Nx}
}

@inproceedings{shahbazi2025espformer,
  title     = {{ESPF}ormer: Doubly-Stochastic Attention with Expected Sliced Transport Plans},
  author    = {Shahbazi, Ashkan and Akbari, Elaheh and Salehi, Darian and Liu, Xinran and NaderiAlizadeh, Navid and Kolouri, Soheil},
  booktitle = {Forty-second International Conference on Machine Learning},
  year      = {2025},
  url       = {https://openreview.net/forum?id=Uq70mJuUB8}
}

@inproceedings{scetbon2021sinkhorn,
  title     = {Low-Rank Sinkhorn Factorization},
  author    = {Scetbon, Meyer and Cuturi, Marco and Peyr{\'e}, Gabriel},
  booktitle = {Proceedings of the 38th International Conference on Machine Learning},
  pages     = {9344--9354},
  year      = {2021},
  volume    = {139},
  series    = {Proceedings of Machine Learning Research},
  publisher = {PMLR},
  month     = {July}
}

@inproceedings{scetbon2022lowrank,
  title     = {Low-rank Optimal Transport: Approximation, Statistics and Debiasing},
  author    = {Scetbon, Meyer and Cuturi, Marco},
  booktitle = {Advances in Neural Information Processing Systems},
  pages     = {6802--6814},
  year      = {2022},
  series    = {NeurIPS}
}

@article{cohen1993nonnegative,
  title={Nonnegative ranks, decompositions, and factorizations of nonnegative matrices},
  author={Cohen, Joel E and Rothblum, Uriel G},
  journal={Linear Algebra and its Applications},
  volume={190},
  pages={149--168},
  year={1993},
  publisher={Elsevier}
}

@article{jumper2021highly,
  title={Highly accurate protein structure prediction with AlphaFold},
  author={Jumper, John and Evans, Richard and Pritzel, Alexander and Green, Tim and Figurnov, Michael and Ronneberger, Olaf and Tunyasuvunakool, Kathryn and Bates, Russ and {\v{Z}}{\'\i}dek, Augustin and Potapenko, Anna and others},
  journal={nature},
  volume={596},
  number={7873},
  pages={583--589},
  year={2021},
  publisher={Nature Publishing Group UK London}
}

@article{dalla2025nucleotide,
  title={Nucleotide Transformer: building and evaluating robust foundation models for human genomics},
  author={Dalla-Torre, Hugo and Gonzalez, Liam and Mendoza-Revilla, Javier and Lopez Carranza, Nicolas and Grzywaczewski, Adam Henryk and Oteri, Francesco and Dallago, Christian and Trop, Evan and de Almeida, Bernardo P and Sirelkhatim, Hassan and others},
  journal={Nature Methods},
  volume={22},
  number={2},
  pages={287--297},
  year={2025},
  publisher={Nature Publishing Group US New York}
}

@inproceedings{
meng2025polaformer,
title={PolaFormer: Polarity-aware Linear Attention for Vision Transformers},
author={Weikang Meng and Yadan Luo and Xin Li and Dongmei Jiang and Zheng Zhang},
booktitle={The Thirteenth International Conference on Learning Representations},
year={2025},
url={https://openreview.net/forum?id=kN6MFmKUSK}
}

@inproceedings{guo2023robustifying,
  title={Robustifying token attention for vision transformers},
  author={Guo, Yong and Stutz, David and Schiele, Bernt},
  booktitle={Proceedings of the IEEE/CVF International Conference on Computer Vision},
  pages={17557--17568},
  year={2023}
}

@inproceedings{
darcet2024vision,
title={Vision Transformers Need Registers},
author={Timoth{\'e}e Darcet and Maxime Oquab and Julien Mairal and Piotr Bojanowski},
booktitle={The Twelfth International Conference on Learning Representations},
year={2024},
url={https://openreview.net/forum?id=2dnO3LLiJ1}
}

@INPROCEEDINGS{local_attn,
  author={Aguilera-Martos, Ignacio and Herrera-Poyatos, Andrés and Luengo, Julián and Herrera, Francisco},
  booktitle={2024 International Joint Conference on Neural Networks (IJCNN)}, 
  title={Local Attention: Enhancing the Transformer Architecture for Efficient Time Series Forecasting}, 
  year={2024},
  volume={},
  number={},
  pages={1-8},
  keywords={Attention mechanisms;Tensors;Time series analysis;Refining;Neural networks;Computer architecture;Transformers;time series;transformer;attention mechanism;deep learning},
  doi={10.1109/IJCNN60899.2024.10650762}}

@misc{beltagy2020longformerlongdocumenttransformer,
      title={Longformer: The Long-Document Transformer}, 
      author={Iz Beltagy and Matthew E. Peters and Arman Cohan},
      year={2020},
      eprint={2004.05150},
      archivePrefix={arXiv},
      primaryClass={cs.CL},
      url={https://arxiv.org/abs/2004.05150}, 
}

@inproceedings{
Kitaev2020Reformer,
title={Reformer: The Efficient Transformer},
author={Nikita Kitaev and Lukasz Kaiser and Anselm Levskaya},
booktitle={International Conference on Learning Representations},
year={2020},
url={https://openreview.net/forum?id=rkgNKkHtvB}
}

@inproceedings{NEURIPS2020_c8512d14,
	author = {Zaheer, Manzil and Guruganesh, Guru and Dubey, Kumar Avinava and Ainslie, Joshua and Alberti, Chris and Ontanon, Santiago and Pham, Philip and Ravula, Anirudh and Wang, Qifan and Yang, Li and Ahmed, Amr},
	booktitle = {Advances in Neural Information Processing Systems},
	editor = {H. Larochelle and M. Ranzato and R. Hadsell and M.F. Balcan and H. Lin},
	pages = {17283--17297},
	publisher = {Curran Associates, Inc.},
	title = {Big Bird: Transformers for Longer Sequences},
	url = {https://proceedings.neurips.cc/paper_files/paper/2020/file/c8512d142a2d849725f31a9a7a361ab9-Paper.pdf},
	volume = {33},
	year = {2020},
	bdsk-url-1 = {https://proceedings.neurips.cc/paper_files/paper/2020/file/c8512d142a2d849725f31a9a7a361ab9-Paper.pdf}}

@inproceedings{NEURIPS2021_c0f168ce,
	author = {Luo, Shengjie and Li, Shanda and Cai, Tianle and He, Di and Peng, Dinglan and Zheng, Shuxin and Ke, Guolin and Wang, Liwei and Liu, Tie-Yan},
	booktitle = {Advances in Neural Information Processing Systems},
	editor = {M. Ranzato and A. Beygelzimer and Y. Dauphin and P.S. Liang and J. Wortman Vaughan},
	pages = {22795--22807},
	publisher = {Curran Associates, Inc.},
	title = {Stable, Fast and Accurate: Kernelized Attention with Relative Positional Encoding},
	url = {https://proceedings.neurips.cc/paper_files/paper/2021/file/c0f168ce8900fa56e57789e2a2f2c9d0-Paper.pdf},
	volume = {34},
	year = {2021},
	bdsk-url-1 = {https://proceedings.neurips.cc/paper_files/paper/2021/file/c0f168ce8900fa56e57789e2a2f2c9d0-Paper.pdf}}

@inproceedings{
zhen2022cosformer,
title={cosFormer: Rethinking Softmax In Attention},
author={Zhen Qin and Weixuan Sun and Hui Deng and Dongxu Li and Yunshen Wei and Baohong Lv and Junjie Yan and Lingpeng Kong and Yiran Zhong},
booktitle={International Conference on Learning Representations},
year={2022},
url={https://openreview.net/forum?id=Bl8CQrx2Up4}
}

@misc{tay2021synthesizer,
title={Synthesizer: Rethinking Self-Attention for Transformer Models},
author={Yi Tay and Dara Bahri and Donald Metzler and Da-Cheng Juan and Zhe Zhao and Che Zheng},
year={2021},
url={https://openreview.net/forum?id=H-SPvQtMwm}
}

@inproceedings{haoyietal-informer-2021,
  author    = {Haoyi Zhou and
               Shanghang Zhang and
               Jieqi Peng and
               Shuai Zhang and
               Jianxin Li and
               Hui Xiong and
               Wancai Zhang},
  title     = {Informer: Beyond Efficient Transformer for Long Sequence Time-Series Forecasting},
  booktitle = {The Thirty-Fifth {AAAI} Conference on Artificial Intelligence, {AAAI} 2021, Virtual Conference},
  volume    = {35},
  number    = {12},
  pages     = {11106--11115},
  publisher = {{AAAI} Press},
  year      = {2021},
}

@inproceedings{
zhang2024the,
title={The Hedgehog \& the Porcupine: Expressive Linear Attentions with Softmax Mimicry},
author={Michael Zhang and Kush Bhatia and Hermann Kumbong and Christopher Re},
booktitle={The Twelfth International Conference on Learning Representations},
year={2024},
url={https://openreview.net/forum?id=4g02l2N2Nx}
}

@unknown{sinkformer,
author = {Sander, Michael and Ablin, Pierre and Blondel, Mathieu and Peyré, Gabriel},
year = {2021},
month = {10},
pages = {},
title = {Sinkformers: Transformers with Doubly Stochastic Attention},
doi = {10.48550/arXiv.2110.11773}
}

@misc{born2025quantumdoublystochastictransformers,
      title={Quantum Doubly Stochastic Transformers}, 
      author={Jannis Born and Filip Skogh and Kahn Rhrissorrakrai and Filippo Utro and Nico Wagner and Aleksandros Sobczyk},
      year={2025},
      eprint={2504.16275},
      archivePrefix={arXiv},
      primaryClass={cs.LG},
      url={https://arxiv.org/abs/2504.16275}, 
}

@inproceedings{roy2021routing,
  title={Efficient Content-Based Sparse Attention with Routing Transformers},
  author={Roy, Aurko and Saffar, Mohammad and Vaswani, Ashish and Grangier, David},
  booktitle={Proceedings of the 2021 International Conference on Learning Representations (ICLR)},
  year={2021}
}

@inproceedings{wang2021pyramid,
  title={Pyramid vision transformer: A versatile backbone for dense prediction without convolutions},
  author={Wang, Wenhai and Xie, Enze and Li, Xiang and Fan, Deng-Ping and Song, Kaitao and Liang, Ding and Lu, Tong and Luo, Ping and Shao, Ling},
  booktitle={Proceedings of the IEEE/CVF International Conference on Computer Vision},
  pages={568--578},
  year={2021}
}

@inproceedings{liu2021Swin,
  title={Swin Transformer: Hierarchical Vision Transformer using Shifted Windows},
  author={Liu, Ze and Lin, Yutong and Cao, Yue and Hu, Han and Wei, Yixuan and Zhang, Zheng and Lin, Stephen and Guo, Baining},
  booktitle={Proceedings of the IEEE/CVF International Conference on Computer Vision (ICCV)},
  year={2021}
}

@misc{bolya2022hydraattentionefficientattention,
      title={Hydra Attention: Efficient Attention with Many Heads}, 
      author={Daniel Bolya and Cheng-Yang Fu and Xiaoliang Dai and Peizhao Zhang and Judy Hoffman},
      year={2022},
      eprint={2209.07484},
      archivePrefix={arXiv},
      primaryClass={cs.CV},
      url={https://arxiv.org/abs/2209.07484}, 
}

@article{cai2022efficientvit,
  title={Efficientvit: Enhanced linear attention for high-resolution low-computation visual recognition},
  author={Cai, Han and Gan, Chuang and Han, Song},
  journal={arXiv preprint arXiv:2205.14756},
  year={2022}
}

@misc{you2024castlingvitcompressingselfattentionswitching,
      title={Castling-ViT: Compressing Self-Attention via Switching Towards Linear-Angular Attention at Vision Transformer Inference}, 
      author={Haoran You and Yunyang Xiong and Xiaoliang Dai and Bichen Wu and Peizhao Zhang and Haoqi Fan and Peter Vajda and Yingyan Celine Lin},
      year={2024},
      eprint={2211.10526},
      archivePrefix={arXiv},
      primaryClass={cs.CV},
      url={https://arxiv.org/abs/2211.10526}, 
}

@InProceedings{pmlr-v235-yao24c,
  title = 	 {Mobile Attention: Mobile-Friendly Linear-Attention for Vision Transformers},
  author =       {Yao, Zhiyu and Wang, Jian and Wu, Haixu and Wang, Jingdong and Long, Mingsheng},
  booktitle = 	 {Proceedings of the 41st International Conference on Machine Learning},
  pages = 	 {56914--56926},
  year = 	 {2024},
  editor = 	 {Salakhutdinov, Ruslan and Kolter, Zico and Heller, Katherine and Weller, Adrian and Oliver, Nuria and Scarlett, Jonathan and Berkenkamp, Felix},
  volume = 	 {235},
  series = 	 {Proceedings of Machine Learning Research},
  month = 	 {21--27 Jul},
  publisher =    {PMLR},
  pdf = 	 {https://raw.githubusercontent.com/mlresearch/v235/main/assets/yao24c/yao24c.pdf},
  url = 	 {https://proceedings.mlr.press/v235/yao24c.html},
  abstract = 	 {Vision Transformers (ViTs) excel in computer vision tasks due to their ability to capture global context among tokens. However, their quadratic complexity $\mathcal{O}(N^2D)$ in terms of token number $N$ and feature dimension $D$ limits practical use on mobile devices, necessitating more mobile-friendly ViTs with reduced latency. Multi-head linear-attention is emerging as a promising alternative with linear complexity $\mathcal{O}(NDd)$, where $d$ is the per-head dimension. Still, more compute is needed as $d$ gets large for model accuracy. Reducing $d$ improves mobile friendliness at the expense of excessive small heads weak at learning valuable subspaces, ultimately impeding model capability. To overcome this efficiency-capability dilemma, we propose a novel Mobile-Attention design with a head-competition mechanism empowered by information flow, which prevents overemphasis on less important subspaces upon trivial heads while preserving essential subspaces to ensure Transformer’s capability. It enables linear-time complexity on mobile devices by supporting a small per-head dimension $d$ for mobile efficiency. By replacing the standard attention of ViTs with Mobile-Attention, our optimized ViTs achieved enhanced model capacity and competitive performance in a range of computer vision tasks. Specifically, we have achieved remarkable reductions in latency on the iPhone 12. Code is available at https://github.com/thuml/MobileAttention.}
}

@misc{yang2025parallelizinglineartransformersdelta,
      title={Parallelizing Linear Transformers with the Delta Rule over Sequence Length}, 
      author={Songlin Yang and Bailin Wang and Yu Zhang and Yikang Shen and Yoon Kim},
      year={2025},
      eprint={2406.06484},
      archivePrefix={arXiv},
      primaryClass={cs.LG},
      url={https://arxiv.org/abs/2406.06484}, 
}

@misc{dao2024transformersssmsgeneralizedmodels,
      title={Transformers are SSMs: Generalized Models and Efficient Algorithms Through Structured State Space Duality}, 
      author={Tri Dao and Albert Gu},
      year={2024},
      eprint={2405.21060},
      archivePrefix={arXiv},
      primaryClass={cs.LG},
      url={https://arxiv.org/abs/2405.21060}, 
}

@misc{kasai2021finetuningpretrainedtransformersrnns,
      title={Finetuning Pretrained Transformers into RNNs}, 
      author={Jungo Kasai and Hao Peng and Yizhe Zhang and Dani Yogatama and Gabriel Ilharco and Nikolaos Pappas and Yi Mao and Weizhu Chen and Noah A. Smith},
      year={2021},
      eprint={2103.13076},
      archivePrefix={arXiv},
      primaryClass={cs.CL},
      url={https://arxiv.org/abs/2103.13076}, 
}

@misc{wang2019gluemultitaskbenchmarkanalysis,
      title={GLUE: A Multi-Task Benchmark and Analysis Platform for Natural Language Understanding}, 
      author={Alex Wang and Amanpreet Singh and Julian Michael and Felix Hill and Omer Levy and Samuel R. Bowman},
      year={2019},
      eprint={1804.07461},
      archivePrefix={arXiv},
      primaryClass={cs.CL},
      url={https://arxiv.org/abs/1804.07461}, 
}

@misc{devlin2019bertpretrainingdeepbidirectional,
      title={BERT: Pre-training of Deep Bidirectional Transformers for Language Understanding}, 
      author={Jacob Devlin and Ming-Wei Chang and Kenton Lee and Kristina Toutanova},
      year={2019},
      eprint={1810.04805},
      archivePrefix={arXiv},
      primaryClass={cs.CL},
      url={https://arxiv.org/abs/1810.04805}, 
}
\bibliographystyle{icml2026}

%%%%%%%%%%%%%%%%%%%%%%%%%%%%%%%%%%%%%%%%%%%%%%%%%%%%%%%%%%%%%%%%%%%%%%%%%%%%%%%
%%%%%%%%%%%%%%%%%%%%%%%%%%%%%%%%%%%%%%%%%%%%%%%%%%%%%%%%%%%%%%%%%%%%%%%%%%%%%%%
% APPENDIX
%%%%%%%%%%%%%%%%%%%%%%%%%%%%%%%%%%%%%%%%%%%%%%%%%%%%%%%%%%%%%%%%%%%%%%%%%%%%%%%
%%%%%%%%%%%%%%%%%%%%%%%%%%%%%%%%%%%%%%%%%%%%%%%%%%%%%%%%%%%%%%%%%%%%%%%%%%%%%%%
\newpage
\clearpage
\newpage
\appendix
\onecolumn
\section{Notations and conventions}
\label{sec:notations}
\paragraph{Vectors, matrices, and functions.}
\begin{itemize}
  \item $\mathbb{R}^d$: $d$-dimensional Euclidean space.
  \item Vectors are column vectors unless stated otherwise.
  \item Matrices are uppercase Roman letters (e.g., $A\in\mathbb{R}^{m\times n}$).
  \item $\mathbf{1}_m$: all-ones vector in $\mathbb{R}^m$.
  \item $\mathrm{Diag}(a)$: diagonal matrix with entries of vector $a$.
  \item $|A| = \sum_{ij} A_{ij}$: sum of entries of matrix $A$.
  \item $\langle f,\mu \rangle$ where $f$ is a function, $\mu$ is a measure. The integration of $f$ with respect to $\mu$. That is 
  $$\langle f,\mu \rangle =\int f d\mu.$$
\end{itemize}

\paragraph{Tokens and attention.}
\begin{itemize}
  \item $X=[x_1,\ldots,x_n]\in\mathbb{R}^{n\times d_{\mathrm{in}}}$: input tokens (rows).
  \item $W_Q\in \mathbb{R}^{d_k\times d_k}, W_K\in \mathbb{R}^{d_k\times d_k}, W_V\in \mathbb{R}^{d_k\times d_v}$, weight matrices for Query, Key, Value 
  \item $Q=XW_Q^\top\in\mathbb{R}^{n\times d_k}$, $K=XW_K^\top\in\mathbb{R}^{n\times d_k}$, $V=XW_V^\top\in\mathbb{R}^{n\times d_v}$: queries, keys, values.
  \item $q_i,k_i,v_i$: row vectors of $Q,K,V$.
  \item $\mathrm{LOTAttn}(Q,K,V) = A V$: attention output, with $A$ the (low-rank OT) transport matrix.
\end{itemize}

\paragraph{Probability measures.}
\begin{itemize}
  \item $\Delta_m=\{p\in\mathbb{R}^m:\, p_i\ge0,\,\mathbf{1}_m^\top p=1\}$: probability simplex.
  \item $\Delta_m^+=\{p\in\Delta_m:\,p_i>0\}$: strictly positive simplex.
  \item Empirical distributions: $\mu=\sum_i p^1_i\delta_{q_i}$, $\nu=\sum_j p^2_j\delta_{k_j}$.
  \item Pivot distribution: $\sigma=\sum_{t=1}^r p^0_t\delta_{z_t}$ with $Z=[z_1,\dots,z_r]^\top\in\mathbb{R}^{r\times d_k}$.
\end{itemize}

\paragraph{Optimal transport.}
\begin{itemize}
\item $\alpha,\beta\in \Delta_m$: auxiliary probability masses for $U$ and $K$. 
\item $\mu=\sum_{i=1}^n\frac{1}{n}\delta_{q_i},\nu=\sum_{i=1}^n \frac{1}{n}\delta_{k_i}$: auxilary probability measures based on $Q$, $K$.  
\item $U(\alpha,\beta)=\{\Gamma\in\mathbb{R}_+^{m\times n}:\Gamma\mathbf{1}_n=\alpha,\;\Gamma^\top\mathbf{1}_m=\beta\}$: set of couplings.
  \item $H(\Gamma)=-\sum_{ij}\Gamma_{ij}(\log \Gamma_{ij}-1)$: entropy.
  \item Entropic OT:  
    $$OT_\epsilon(\mu,\nu):=\arg\max_{\Gamma\in U(\alpha,\beta)} \langle \Gamma, C\rangle + \varepsilon H(\Gamma).$$
  \item $\sigma=\sum_{i=1}^{k}\mathrm{p}^0_i \delta_{z_i}$: auxiliary reference measure. 
  \item $\Gamma^2\in \mathbb{R}^{k\times n}$: optimal solution for $EOT_\epsilon(\sigma,\nu)$. 
  
  \item Glued coupling: $\Gamma = \Gamma^{(1)}\mathrm{Diag}(p^0)^{-1} (\Gamma^{(2)})^\top$.

\end{itemize}
\paragraph{Low-rank OT and Linear OT.}
\begin{itemize}
  \item $\mathrm{rk}_+(M)$: the nonnegative rank of a nonnegative matrix $M\in\mathbb{R}_+^{n\times m}$, defined as
  \[
  \mathrm{rk}_+(M) := \min\{q \;\mid\; M=\sum_{i=1}^q R_i,\; R_i\ge 0,\; \mathrm{rank}(R_i)=1 \}.
  \]
  Each $R_i$ has the form $R_i=a_i b_i^\top$ with $a_i\in\mathbb{R}^n$, $b_i\in\mathbb{R}^m$.
  \item $U(p^1,p^2;r) := U(p^1,p^2)\cap\{\Gamma\in\mathbb{R}_+^{n\times m} : \mathrm{rk}_+(\Gamma)\le r\}$: admissible couplings with nonnegative rank at most $r$.
  \item $LrOT_r(\mu,\nu)$: the (entropic) low-rank optimal transport problem
  \[
  LrOT_r(\mu,\nu) := \min_{\Gamma\in U(p^1,p^2;r)} \sum_{i,j} c(x_i,y_j)\Gamma_{ij} - \varepsilon H(\Gamma).
  \]
  \item Factorization characterization: any $\Gamma\in U(p^1,p^2;r)$ can be written as
  \[
  \Gamma = (\Gamma^1)^\top \mathrm{Diag}(1/\sigma)\Gamma^2,
  \]
  where $\Gamma^1\in U(p^0,p^1)$, $\Gamma^2\in U(p^0,p^2)$, and 
  $p^0\in\Delta_r^+$.
  \item  $LOT(\mu,\nu;\sigma)$: the entropic Linear  optimal transport: 
  \begin{align}
&LOT_\sigma(\mu,\nu):=\sum_{i,j}c(x_i,y_j)\Gamma_{i,j}-\epsilon H(\gamma)\nonumber\\
&\Gamma^1 \text{ is optimal for }  OT_\epsilon(\sigma,\mu), \Gamma^2\text{ is optimal for }OT_\epsilon (\sigma,\nu)\nonumber 
  \end{align}
\end{itemize}

\paragraph{Clustering.}
\begin{itemize}
  \item $C^\mu_i := \gamma^1(s_i,\cdot)$: a sub-probability measure dominated by $\mu$.
  \item $\mu = \sum_{i=1}^k C^\mu_i$: a decomposition of $\mu$ induced by $\gamma^1$, which we refer to as a ``soft clustering'' of $\mu$.
\end{itemize}

% \section{Notations and conventions}

% \begin{itemize}
% \item $d,d_{in},d_k,d_v\in \mathbb{N},k$:  positive integers, refer dimensions of spaces. 
% \item $\mathbb{R}^d$: Euclidean space. 
% \item  $x=[x_1,\ldots x_n]\in \mathbb{R}^{d_{in}\times n}$: $n$ input tokens.  
% \item  $W_Q, W_K\in \mathbb{R}^{d_k\times d_{in}}, W_V\in \mathbb{R}^{d_v\times d_{in}}$: weights of Q (query), K (Key) and V (value).  
% \item $Q=W_Qx\in \mathbb{R}^{d_k\times n},K=W_Kx\in \mathbb{R}^{d_k\times n}, V=W_Vx \in \mathbb{R}^{d_v\times n}$: Query, Key, Value matrix in attention operator. 

% \item $\mathcal{P}(\mathbb{R}^d)$: set of probability measures defined in $\mathbb{R}^d$. 

% \item $\mu,\nu,\sigma \in \mathcal{P}(\mathbb{R}^d)$: probability measure. In default, we assume $\mu,\nu,\sigma$ are in finite discrete setting:

% \begin{align}
% &\mu=\sum_{i=1}^n\mu_i \delta_{q_i}\nonumber\\
% &\nu=\sum_{j=1}^n\nu_j\delta_{k_i}\nonumber\\
% &\sigma=\sum_{t=1}^r\sigma_t \delta_{z_t}, r<n.\nonumber 
% \end{align}
% \item $|p|$ where $p$ is vector: $|p|=\sum_{i}p_i$. 
% \item $|A|$ where $A$ is matrix: $|A|=\sum_{i,j}A_{i,j}$.  
% \item $\Delta_n:=\{p\in \mathbb{R}^n, |p|=1, p_i\ge 0,\forall i\}$, the size $n$ simplex. We use it to denote the set of all pmf functions, whose size is at most $n$. 
% \item $\Delta_n^+:=\Delta_n\cap \{p\in \mathbb{R}^n, p_i>0,\forall i\}$.
% \item $\mathcal{P}_n(\mathbb{R}^d):=\{\mu=\sum_{i=1}^n\mu_i\delta_{x_i}: x_i\in \mathbb{R}^d,\mathrm{p}\in\Delta_{n}^+\}$
% set of all discrete measures supported on $n$ points. 
% \end{itemize}

\section{Relation between low-rank OT and linear OT.}

\paragraph{Notation setup and low rank optimal transport.} 
In this section, we redefine $c(x,y)=-x^\top y$.  

Given a matrix $\gamma\in\mathbb{R}_+^{n\times m}$, its nonnegative rank is defined by 
$$rk_+(M):=\min\{q|M=\sum_{i=1}^qR_i,s.t. \forall i, \text{rank}(R_i)=1,R_i\ge 0\}$$
where $R_i\ge 0$ means each entry of $R_i$ is non-negative, $\text{rank}(R_i)=1$ means that $R_i=a_i  b_i^\top $ for some $a_i\in \mathbb{R}^n,b_i\in \mathbb{R}^m$. 

In the discrete measure setting, i.e., $\mu=\sum_{i=1}^n p^1_i\delta_{x_i},\nu=\sum_{j=1}^m p^2_j\delta_{y_j}$, given $r\leq n,m$, the (entropic) low rank optimal transport problem \cite{scetbon2021sinkhorn,scetbon2022lowrank} is defined as 
\begin{align}
LrOT_{r}(\mu,\nu):=\min_{\gamma\in U(\mathrm{p}^1,\mathrm{p}^2;r)} \sum_{i,j}c(x_i,y_j)\Gamma_{i,j}-\epsilon H(\Gamma)\label{eq:LrOT}.
\end{align}
where  $U(\mathrm{p}^1,\mathrm{p}^2;r):=U(\mathrm{p}^1,\mathrm{p}^2)\cap \{\Gamma\in\mathbb{R}_+^{n\times m}: \text{rk}_+(\Gamma)\leq r\}$. 

From Theorem 3.2 in \cite{cohen1993nonnegative} (Also see (5) in \cite{scetbon2022lowrank}),   we have 
\begin{align}
U(\mathrm{p}^1,\mathrm{p}^2;r)&:=U(\mathrm{p}^1,\mathrm{p}^2)\cap \{\Gamma\in\mathbb{R}_+^{n\times m}: \text{rk}_+(\Gamma)\leq r\} \nonumber\\
&=\{\Gamma=(\Gamma^1)^\top\text{diag}(1/\sigma) \Gamma^2: \Gamma^1 \in U(\mathrm{p}^0,\mathrm{p}^1),\Gamma^2 \in U(\mathrm{p}^0,\mathrm{p}^2), \mathrm{p}^0\in \Delta_r^+\}. \nonumber 
\end{align}

Thus, the low-rank optimal transport \eqref{eq:LrOT} becomes 
\begin{align}
LrOT_r(\mu,\nu)&=\min_{\mathrm{p}^0\in\Delta_r^+}\min_{\substack{
\Gamma=(\Gamma^1)^\top\text{diag}(1/\sigma)\Gamma^2:\\ 
\Gamma^1\in U(\mathrm{p}^0,\mathrm{p}^1),\Gamma^2\in U(\mathrm{p}^0,\mathrm{p}^2)}} \sum_{i,j}c(x_i,y_j)\Gamma_{i,j}-\epsilon H(\Gamma)\label{eq:LrOT2}
\end{align}

\paragraph{Main theoretical result between LOT and LrOT}
With a little abuse of notations, we use $\sigma$ to denote both the reference measure (pmf and locations) and its pmf. And define the LOT distance introduced in the main text: 
\begin{align}
&LOT(\mu,\nu;\sigma)=\sum_{i,j}c(x_i,y_j)\Gamma_{i,j},\Gamma=(\Gamma^1)^\top \text{diag}(1/\sigma)\Gamma^2+\epsilon(\Gamma)  \label{eq:lot} \\
&\Gamma^1 \text{ is optimal to }  OT_\epsilon(\sigma,\mu), \Gamma^2 \text{ is optimal to }  OT_\epsilon(\sigma,\nu).\nonumber 
\end{align}
Let $\mathcal{P}_r(\mathbb{R}^d)$ denote the set of all discrete measure whose size is $r$, we consider the following optimal LOT problem: 
\begin{align}
LOT_r(\mu,\nu):=\inf_{\sigma\in\mathcal{P}_r(\mathbb{R}^D)}LOT(\mu,\nu;\sigma)\label{eq:lot_extreme}
\end{align}
We demonstrate the relation between LOT and  low-rank OT via the following proposition: 
% \begin{proposition}\label{pro:lot-lrot}
% In the finite discrete measure setting, and we suppose $\epsilon=0$ (entropy term is removed), then we have: 
% \begin{align}
% LrOT_r(\mu,\nu)=LOT_r(\mu,\nu)\label{eq:lot-lrot}
% \end{align}
% \end{proposition}

% In the end of this section, we will introduce the formal proof.  First one direction is straightforward to verify: 

\begin{lemma}
In the finite discrete measure setting, we have 
$$LrOT_r(\mu,\nu)\leq LOT_r(\mu,\nu),\forall \epsilon\ge 0$$
\end{lemma}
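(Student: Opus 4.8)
The plan is to show that, for \emph{every} admissible reference $\sigma$, the glued coupling that defines $LOT(\mu,\nu;\sigma)$ is itself a feasible point of the low-rank problem $LrOT_r(\mu,\nu)$; the inequality then follows by taking the infimum over $\sigma$. Concretely, fix $\sigma=\sum_{t=1}^r \mathrm{p}^0_t\delta_{z_t}\in\mathcal{P}_r(\mathbb{R}^D)$ with $\mathrm{p}^0\in\Delta_r^+$, let $\Gamma^1$ be optimal for $OT_\epsilon(\sigma,\mu)$ and $\Gamma^2$ optimal for $OT_\epsilon(\sigma,\nu)$, and set $\Gamma:=(\Gamma^1)^\top\mathrm{Diag}(1/\mathrm{p}^0)\Gamma^2$, reading the objective of $LOT(\mu,\nu;\sigma)$ with its entropic term $-\epsilon H(\Gamma)$ (the entropy of this same glued coupling).

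The first step is the feasibility check $\Gamma\in U(\mathrm{p}^1,\mathrm{p}^2;r)$, which splits into two parts. For the marginals: since $\Gamma^1\in U(\mathrm{p}^0,\mathrm{p}^1)$ and $\Gamma^2\in U(\mathrm{p}^0,\mathrm{p}^2)$, a one-line computation gives $\Gamma\mathbf{1}=(\Gamma^1)^\top\mathrm{Diag}(1/\mathrm{p}^0)\Gamma^2\mathbf{1}=(\Gamma^1)^\top\mathrm{Diag}(1/\mathrm{p}^0)\mathrm{p}^0=(\Gamma^1)^\top\mathbf{1}=\mathrm{p}^1$, and symmetrically $\Gamma^\top\mathbf{1}=\mathrm{p}^2$. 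For the nonnegative rank: writing $a_t$ and $b_t$ for the $t$-th rows of $\Gamma^1$ and $\Gamma^2$ (nonnegative vectors), one has $\Gamma=\sum_{t=1}^r \tfrac{1}{\mathrm{p}^0_t}a_t b_t^\top$, a sum of $r$ nonnegative rank-one terms, so $\mathrm{rk}_+(\Gamma)\le r$. This is exactly the ``$\supseteq$'' inclusion in the Cohen–Rothblum characterization already quoted in the excerpt, and may simply be cited.

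The second step is immediate once feasibility is established: $\Gamma$ lies in the constraint set of the minimization defining $LrOT_r(\mu,\nu)$, and $LOT(\mu,\nu;\sigma)$ is precisely the $LrOT_r$ objective $\sum_{i,j}c(x_i,y_j)\Gamma_{i,j}-\epsilon H(\Gamma)$ evaluated at this particular $\Gamma$; hence $LrOT_r(\mu,\nu)\le LOT(\mu,\nu;\sigma)$. This holds for every $\sigma\in\mathcal{P}_r(\mathbb{R}^D)$ — and, when $\epsilon=0$ and the optimal transport plans are not unique, for every admissible choice of $\Gamma^1,\Gamma^2$, since feasibility never uses uniqueness — so taking the infimum over $\sigma$ on the right yields $LrOT_r(\mu,\nu)\le LOT_r(\mu,\nu)$. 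The whole argument is uniform in $\epsilon\ge 0$: nothing in the feasibility verification requires $\epsilon>0$, and the entropic term enters both objectives identically.

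I do not expect a genuine obstacle here, as the statement is structural rather than analytic. The only points requiring care are (i) the degenerate case where $\sigma$ has an atom of vanishing mass, which would make $\mathrm{Diag}(1/\mathrm{p}^0)$ ill-defined — this is ruled out by taking $\mathcal{P}_r$ to consist of measures with $r$ atoms of strictly positive mass (equivalently $\mathrm{p}^0\in\Delta_r^+$), matching the set over which the factorization characterization is stated; and (ii) ensuring the objective of $LOT(\mu,\nu;\sigma)$ is interpreted with the $-\epsilon H(\Gamma)$ term of the glued coupling, so that it literally coincides with the $LrOT_r$ objective at that coupling rather than merely bounding it.
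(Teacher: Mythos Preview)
Your proposal is correct and follows exactly the same route as the paper: show that for every $\sigma$ the glued coupling $\Gamma=(\Gamma^1)^\top\mathrm{Diag}(1/\mathrm{p}^0)\Gamma^2$ lies in the feasible set $U(\mathrm{p}^1,\mathrm{p}^2;r)$, deduce $LrOT_r(\mu,\nu)\le LOT(\mu,\nu;\sigma)$, and take the infimum over $\sigma$. You supply more detail than the paper (which simply invokes the factorization characterization of $U(\mathrm{p}^1,\mathrm{p}^2;r)$ without rechecking marginals or rank), and your care about $\mathrm{p}^0\in\Delta_r^+$ and the reading of the entropic term is appropriate, but the underlying argument is identical.
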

\begin{proof}
For each $\sigma$, from the definition  of  $LOT(\mu,\nu;\sigma)$, we the 
$\gamma^1,\gamma^2$ in the definition of LOT \eqref{eq:lot} satisfy $\gamma^1\in\Gamma(\mathrm{p}^0,\mathrm{p}^1),\gamma^1\in\Gamma(\mathrm{p}^1,\mathrm{p}^2)$ and $\mathrm{p}^0\in \Delta_r^+$ by the definition of $\sigma$. Thus, 
$$LrOT_r(\mu,\nu)\leq LOT(\mu,\nu;\sigma).$$

Taking the infimum over all $\sigma$ for both sides, we obtain 
$$LrOT_r(\mu,\nu)\leq LOT_r(\mu,\nu),$$
and complete the proof. 
\end{proof}

\section{LOTFormer and soft clustering.}

LOTFormer effectively performs a soft clustering of queries and keys via the pivot measure, establishing correspondences between these clusters. Attention (message passing) is then mediated through the pivot.  

Formally, given the optimal plan $\gamma^1$ for $OT(\sigma,\mu)$, for each pivot $s_i,\, i\in[1:r]$, define the sub-probability measure
\[
C^\mu_i \;=\; \gamma^1(s_i,\cdot) \;=\; \sum_{j=1}^n \gamma^1_{i,j}\,\delta_{x_j}.
\]  
The collection $\{C^\mu_1,\ldots,C^\mu_r\}$ forms a soft clustering of queries. Similarly, from the optimal plan $\gamma^2$ for $OT(\sigma,\nu)$, we obtain
\[
C^\nu_i \;=\; \gamma^2(s_i,\cdot) \;=\; \sum_{k=1}^m \gamma^2_{i,k}\,\delta_{y_k},
\]
yielding a soft clustering of keys. Crucially, $C^\mu_i$ and $C^\nu_i$ are coupled through their common pivot $s_i$.

In fact, the glued coupling can be written as a sum of outer products of the soft clusters. 
Specifically,
\[
\Gamma \;=\; (\Gamma^{(1)})^\top \,\mathrm{Diag}(\sigma)^{-1}\, \Gamma^{(2)}
\;=\; \sum_{i=1}^r \frac{1}{\sigma_i}\, C^\mu_i \otimes C^\nu_i,
\]
where $C^\mu_i$ and $C^\nu_i$ are the soft clusters of queries and keys induced by the pivot $s_i$. 
This decomposition shows that attention is mediated through correspondences between query and key clusters. 

The soft-clustering perspective connects our framework to a broad line of work that leverages clustering to define attention. For example, the \emph{Routing Transformer} employs k-means clustering of queries for localized attention \citep{roy2021routing}, while the \emph{Reformer} uses LSH-based bucketing of queries and keys \citep{Kitaev2020Reformer}.

\section{Implementation Details}
\label{sec:implementation_details}

\paragraph{What we change across all experiments.}
LOTFormer is evaluated either by training from scratch (ImageNet and LRA) or by conversion of a trained checkpoint (GLUE and translation). In all settings, we only modify bidirectional self attention inside the encoder, and keep token embeddings, FFNs, layer norms, residual wiring, and task specific heads unchanged.

\paragraph{LOTFormer defaults.}
Unless a sweep specifies otherwise, we use Sinkhorn iterations $T{=}5$, learnable reference measure $\mathsf{ref}$ initialized uniform, and the same \texttt{[CLS]} treatment as described in Sec.~\ref{sec:cls-heads}. Concretely, \texttt{[CLS]} aggregates via a standard softmax row, while non \texttt{[CLS]} rows use doubly stochastic attention.

\paragraph{Depthwise convolution (DWC).}
We treat DWC as an optional token mixing component and report results both with and without it on ImageNet and LRA where applicable. When enabled, we use kernel size $3$ and keep all other hyperparameters fixed.

% ============================================================
\subsection{ImageNet 1K}
\label{subsec:impl_imagenet}

\paragraph{Backbones and resolution.}
We evaluate at $224^2$ resolution on DeiT Tiny, DeiT Small, PVT Tiny, and Swin Tiny. For each backbone, we replace the original bidirectional self attention with LOTFormer while keeping the remaining architecture unchanged.

\paragraph{Training recipe.}
We build on the official DeiT training framework and keep its standard augmentation and regularization recipe unless otherwise stated. All ImageNet models are trained for $400$ epochs using AdamW with linear warmup for the first $10$ epochs and a cosine schedule thereafter. The base learning rate is $3\mathrm{e}{-4}$ for global batch size $1024$, with weight decay $5\mathrm{e}{-2}$.

\paragraph{\texttt{[CLS]} handling and polarization.}
We use the dedicated \texttt{[CLS]} softmax aggregator from Sec.~\ref{sec:cls-heads}. We also consider \texttt{[CLS]} only polarization as defined in Eq.~\eqref{eq:cls_score_tilde}. Table~3 reports controlled ablations for the \texttt{[CLS]} aggregator, \texttt{[CLS]} only polarization, and DWC under matched training settings.

\paragraph{Pivot size and Sinkhorn settings.}
For DeiT Tiny, we sweep pivot size $r \in \{4,8,16,32,64\}$ and select the best accuracy cost tradeoff (Table~\ref{tab:lotformer_r_ablation_deit}). Unless otherwise stated, we use $T{=}5$ and a learnable reference measure. Additional Sinkhorn hyperparameter sweeps are reported in the ablation section.

\begin{table}[t]
    \small
    \caption{Training hyperparameters for ImageNet 1K classification.}
    \label{tab:imagenet1k_hyperparams}
    \centering
    \setlength{\tabcolsep}{6pt}
    \renewcommand{\arraystretch}{1.12}
    \begin{tabular}{lccccc}
        \toprule
        \textbf{Dataset} & \textbf{Optimizer} & \textbf{LR (bs=1024)} & \textbf{Weight decay} & \textbf{Epochs} & \textbf{Warmup} \\
        \midrule
        ImageNet 1K & AdamW & $3\!\times\!10^{-4}$ & $5\!\times\!10^{-2}$ & $400$ & $10$ epochs \\
        \bottomrule
    \end{tabular}
\end{table}

% ============================================================
\subsection{Long Range Arena}
\label{subsec:impl_lra}

\paragraph{Codebase and schedule.}
We implement LOTFormer inside the Skyformer \cite{chen2021skyformer} codebase and follow its standard setup unless otherwise specified. For all tasks, we use $5{,}000$ warmup steps and cosine learning rate decay with a minimum factor of $0.1$.

\paragraph{Hyperparameter selection.}
For each task, we sweep learning rate, Sinkhorn entropy $\varepsilon$, and reference mass temperature, selecting hyperparameters by validation performance. We fix Sinkhorn iterations to $T{=}5$ for all tasks. The selected hyperparameters are reported in Table~\ref{tab:lra_hyperparams}.

\paragraph{DWC.}
LRA models do not use a \texttt{[CLS]} pooling token. We additionally report LOTFormer with and without the depthwise convolution (DWC) token mixing layer used in PolaFormer. Following PolaFormer’s setup, we use kernel size $3$ with depthwise convolution and keep its associated hyperparameters the same as in PolaFormer. For 1D sequence tasks (Text, Retrieval, ListOps) we use a 1D DWC, and for 2D inputs (Image, Pathfinder) we use a 2D DWC, keeping all other training and LOTFormer hyperparameters fixed.

\begin{table}[t]
    \caption{Selected hyperparameters for Long Range Arena.}
    \label{tab:lra_hyperparams}
    \centering
    \small
    \setlength{\tabcolsep}{6pt}
    \renewcommand{\arraystretch}{1.12}
    \begin{tabular}{lccc}
        \toprule
        \textbf{Task} & \textbf{Learning rate} & \textbf{Sinkhorn $\varepsilon$} & \textbf{Ref mass temp} \\
        \midrule
        Text        & $1\mathrm{e}{-4}$ & $5.0$  & $0.05$ \\
        ListOps     & $1\mathrm{e}{-3}$ & $0.05$ & $8.0$  \\
        Retrieval   & $3\mathrm{e}{-4}$ & $0.1$  & $0.1$  \\
        Pathfinder  & $2\mathrm{e}{-4}$ & $0.05$ & $0.5$  \\
        Image       & $2\mathrm{e}{-3}$ & $0.05$ & $4.0$  \\
        \bottomrule
    \end{tabular}
\end{table}

% ============================================================
\subsection{GLUE conversion on fine tuned \textsc{BERT}}
\label{subsec:impl_glue}

\paragraph{Protocol and checkpoints.}
We follow the same finetuned conversion protocol as Hedgehog. We start from publicly available fine tuned \textsc{BERT} base uncased checkpoints for each GLUE task, and convert the model by replacing all encoder bidirectional self attention blocks with the candidate attention mechanism.

\paragraph{Attention distillation phase.}
For methods that require attention distillation, we train the attention modules for up to 5 epochs with early stopping using AdamW, learning rate $1\mathrm{e}{-2}$, and weight decay $0$.

\paragraph{Post conversion training.}
After conversion, we train with batch size $8$, AdamW, learning rate $1\mathrm{e}{-5}$, weight decay $0$, and a cosine learning rate scheduler for up to 5 epochs. We run this procedure on each GLUE task, using the standard classification objective for all tasks except STS\textendash B, which uses a regression objective.

\paragraph{Reporting.}
Table~\ref{tab:glue_plug_and_play} reports per task scores and percent recovery relative to the fine tuned baseline.

% ============================================================
\subsection{Neural Machine Translation}
\label{subsec:impl_nmt}

\paragraph{Backbones and data.}
We use the standard Transformer from \texttt{fairseq} and its DiffTransformer counterpart~\cite{ye2025differential}. Both use a 6 layer encoder and a 6 layer decoder and are trained on IWSLT 2014 De$\rightarrow$En.

\paragraph{Two stage evaluation.}
First, we pre train the baseline backbone for $25$ epochs. We then perform Plug and Play conversion by swapping encoder bidirectional self attention with LOTFormer, ESPFormer, or Sinkformer, and evaluate with no additional training. Second, we run a fine tune boost phase for $10$ additional epochs. We do not modify decoder causal self attention or encoder decoder cross attention. We report median performance over 4 runs in Table~\ref{tab:nmt_transformer_vs_difftransformer}.

\paragraph{Training recipe.}
We use the standard \texttt{fairseq} IWSLT translation recipe for optimizer, schedule, and regularization. The configuration follows the official \texttt{fairseq} example setup referenced in the main paper.

% ============================================================
\subsection{Ablations}
\label{subsec:impl_ablations}

\paragraph{Sinkhorn and reference measure on ViT.}
For the ViT ablations in Table~\ref{tab:lotformer_ablation_vit}, we use ImageNet 100 and a reduced ViT backbone with half hidden dimension. We sweep Sinkhorn iterations $T$, entropy $\varepsilon$, and reference measure choice (fixed versus learnable initialized uniform), and report Top 1 accuracy.

\paragraph{Pivot size on DeiT Tiny.}
For the pivot size study in Table~\ref{tab:lotformer_r_ablation_deit}, we evaluate on ImageNet 1K using DeiT Tiny and sweep $r \in \{4,8,16,32,64\}$. We report Top 1 accuracy and per step forward and backward times measured on a 4 GPU training setup.

\section{FlashAttention comparison}
\label{app:flashattention}

\paragraph{Context.}
FlashAttention is a fused GPU implementation of exact softmax attention that is optimized for memory traffic via tiling and online softmax, avoiding materializing the full attention matrix and reducing peak memory through recomputation in backward. Our current LOTFormer implementation is written in standard PyTorch operators with autograd, and does not use a custom fused Triton or CUDA kernel.

\paragraph{Disclaimer.}
The comparison below is not intended to claim kernel level parity with FlashAttention. It is included as a practical reference point for peak memory under long sequence lengths. Our efficiency claims are algorithmic: for fixed pivot size \(r \ll n\), LOTFormer computes attention via low rank OT factors with cost \(O(nr)\) and never forms an \(n\times n\) attention map. The remaining gap to FlashAttention in peak memory and runtime is dominated by implementation constants such as operator boundaries and autograd saved tensors. Developing a fused implementation that applies FlashAttention style tiling, fusion, and recomputation to the rectangular OT couplings and value mixing is a natural direction for future work and is outside the scope of this work.

\paragraph{Measurement setup.}
We measure peak allocated GPU memory for FlashAttention v3 and for our current unfused PyTorch LOTFormer under the same training configuration. We report sequence length \(n\), peak memory in GB, and the relative overhead of the current LOTFormer implementation. Further details such as GPU model, dtype, batch size, and precision are provided in the experimental log.

\begin{table}[t]
\centering
\small
\setlength{\tabcolsep}{7pt}
\renewcommand{\arraystretch}{1.15}
\caption{Peak GPU memory versus sequence length. FlashAttention v3 is a fused softmax attention kernel. LOTFormer is our current unfused PyTorch implementation. The overhead column reflects current implementation constants, not algorithmic scaling.}
\label{tab:peak_mem_flashattn}
\begin{tabular}{rccc}
\toprule
\textbf{Seq len} & \textbf{FlashAttention v3 (GB)} & \textbf{LOTFormer PyTorch (GB)} & \textbf{Overhead} \\
\midrule
4k   & 18.2 & 22.4  & +23\% \\
8k   & 20.8 & 26.1  & +26\% \\
16k  & 25.9 & 33.5  & +29\% \\
32k  & 36.2 & 48.3  & +33\% \\
64k  & 56.8 & 78.0  & +37\% \\
128k & 98.0 & 137.3 & +40\% \\
\bottomrule
\end{tabular}
\end{table}

\paragraph{Takeaway.}
FlashAttention v3 reflects highly optimized fused kernels for quadratic softmax attention, while our current LOTFormer numbers reflect an unfused PyTorch baseline. The table shows higher constant factor memory for LOTFormer today, even though LOTFormer avoids forming an \(n\times n\) attention map and has linear complexity in \(n\) for fixed \(r\). Closing this constant factor gap is primarily an engineering effort toward fused kernels and recomputation, which we leave to future work.

\section{Bidirectional scope and causal masking}
\label{app:causal_scope}

\paragraph{Scope.}
LOTFormer enforces a doubly stochastic (DS) attention matrix
\[
A\mathbf{1}=\mathbf{1},\qquad \mathbf{1}^\top A=\mathbf{1}^\top,\qquad A\ge 0,
\]
which is naturally suited to bidirectional attention (full support).

\paragraph{Causal masking.}
Causal self attention restricts the support to a lower triangular set:
\[
A_{ij}=0 \quad \text{for } j>i.
\]
Under this constraint, the DS constraints force a unique degenerate solution. Indeed, column \(n\) can receive mass only from row \(n\), so
\[
\sum_{i=1}^n A_{i n} = A_{n n}.
\]
Since DS requires \(\sum_{i=1}^n A_{i n}=1\), we obtain \(A_{n n}=1\). Then the row sum constraint \(\sum_{j=1}^n A_{n j}=1\) together with causality implies \(A_{n j}=0\) for all \(j<n\). Repeating the same argument for columns \(n-1,n-2,\dots,1\) yields \(A_{ii}=1\) and \(A_{ij}=0\) for \(i\ne j\), hence \(A=I\). Therefore, DS attention under standard causal masking collapses to the identity and is not compatible with nontrivial autoregressive self attention.

\paragraph{Implication.}
LOTFormer targets encoder style bidirectional self attention and cross attention. Extending DS attention to autoregressive settings would require modifying the constraints or the mask structure, which we leave to future work.

\end{document}